\documentclass{article}

\usepackage{microtype}
\usepackage{graphicx}
\usepackage{subcaption}
\usepackage{booktabs}
\usepackage{multirow}
\usepackage{hyperref}

\usepackage[preprint]{icml2026}

\usepackage{amsmath}
\usepackage{amssymb}
\usepackage{mathtools}
\usepackage{amsthm}
\usepackage{pifont}
\usepackage[table]{xcolor}
\usepackage[capitalize,noabbrev]{cleveref}

\definecolor{Gray}{gray}{0.95}

\newcommand{\ourmethod}{ODE-M}

\theoremstyle{plain}
\newtheorem{theorem}{Theorem}[section]

\theoremstyle{definition}
\newtheorem{definition}[theorem]{Definition}

\theoremstyle{remark}

\icmltitlerunning{Unlocking the Potential of Continual Model Merging: An ODE Perspective}

\begin{document}

\twocolumn[
  \icmltitle{Unlocking the Potential of Continual Model Merging: An ODE Perspective}

  \icmlsetsymbol{equal}{*}

  \begin{icmlauthorlist}
    \icmlauthor{Lihong Lin}{neu}
    \icmlauthor{Haidong Kang}{neu}
  \end{icmlauthorlist}

  \icmlaffiliation{neu}{Northeastern University, Shenyang, China}
  \icmlcorrespondingauthor{Haidong Kang}{kanghaidong@qhd.neu.edu.cn}
  \icmlkeywords{Machine Learning, ICML}

  \vskip 0.3in
]

\printAffiliationsAndNotice{}

\begin{abstract}

Continual Model Merging (CMM) enables rapid customization of foundation models by sequentially incorporating task-adapted models without repeated retraining. 
However, existing merging rules usually update the deployed model through fixed algebraic or projection-based operations, providing limited control over how much previously accumulated knowledge should be retained relative to the incoming task model. 
This limitation leads to unstable retention and performance degradation in long task streams, and becomes more pronounced when tasks have heterogeneous utilities. 
We propose ODE-driven Merging (ODE-M), a controllable framework that formulates each continual merge as a trajectory in parameter space rather than a one-step endpoint update. 
Motivated by mode connectivity, ODE-M constructs a barrier-aware trajectory using a rectified time-dependent velocity field, where lightweight first-order feedback from a small calibration set suppresses loss-increasing motion while preserving progress toward the incoming model. 
The next merged model is then obtained by selecting an operating point along this trajectory through a utility-aware time schedule, providing an explicit mechanism for balancing retained historical knowledge and incoming task expertise. 
Extensive experiments on standard CMM benchmarks show that ODE-M consistently improves over strong continual merging baselines across CLIP ViT backbones, stream lengths, and heterogeneous task-utility settings.

\end{abstract}

\section{Introduction}

Fine-tuning has become the standard route for adapting foundation models to various downstream tasks and domains, and the open-source ecosystem has made task-adapted variants increasingly abundant~\citep{houlsby2019parameter, wortsman2022robust}. As these specialized models accumulate, it becomes desirable to reuse and combine existing task expertise rather than retraining from scratch whenever users' requirements change~\citep{ilharco2022editing,yadav2023ties}.
Model merging addresses this goal by integrating multiple models with shared architectures directly in parameter space, aiming to consolidate complementary capabilities into a single deployable model without additional training~\citep{li2023deep,yang2024model,yadav2024survey}. Recent studies have demonstrated the utility of model merging across a range of scenarios, from combining language models with complementary task expertise~\citep{fu2025training,nobari2025activation} to consolidating vision models adapted to distinct domains~\citep{ye2023merging,li2024training}.

\begin{figure}[t]
    \centering
    \includegraphics[width=\linewidth]{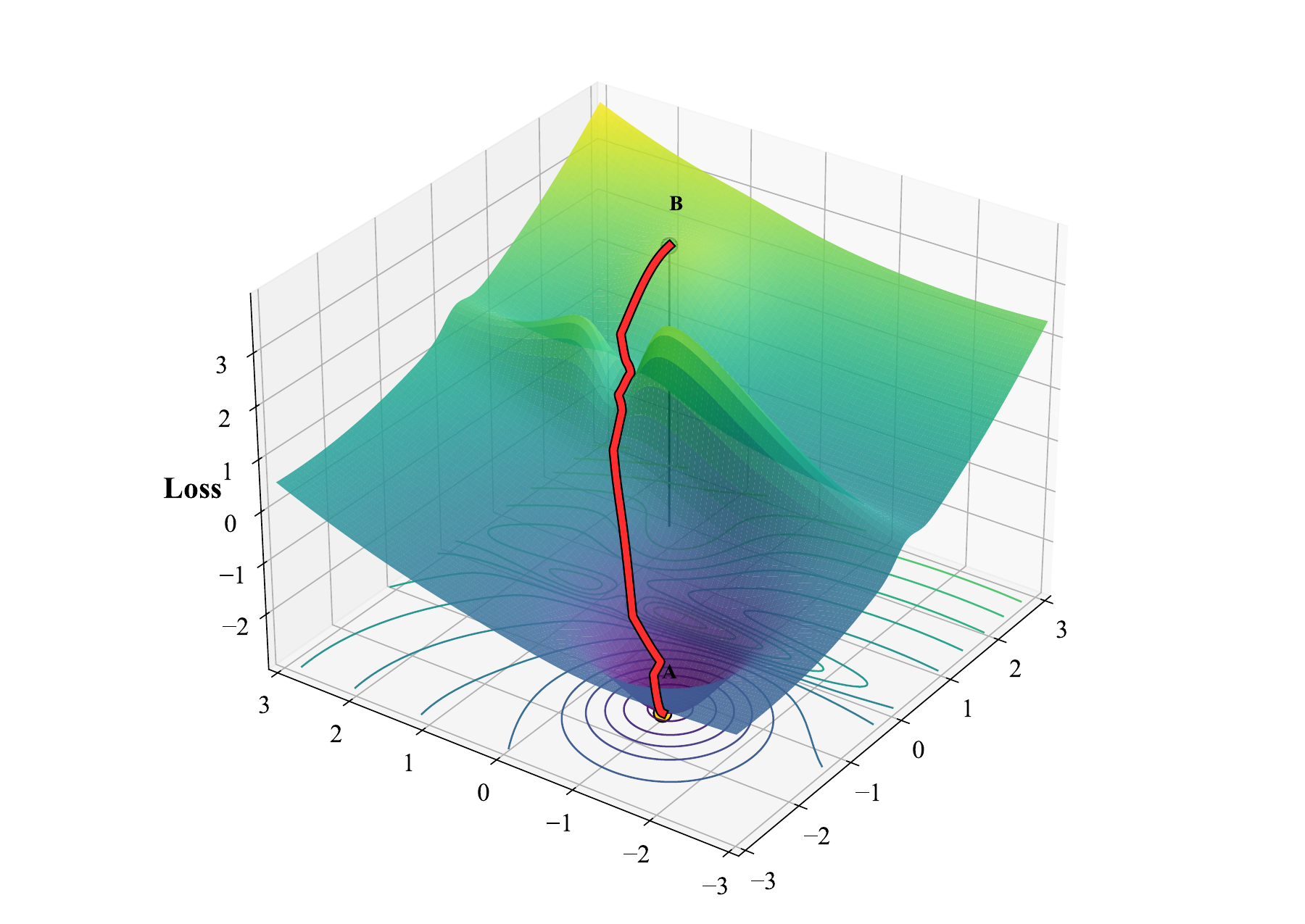}
    \caption{\textbf{Mode connectivity} suggests that two independently trained models can often be joined by a continuous path in parameter space along which the loss remains low (nearly unchanged).}
    \label{fig:com01}
\end{figure}

\begingroup
\renewcommand{\thefootnote}{}
\footnotetext{Code is available at: \href{https://github.com/yohbii/ODE-M}{https://github.com/yohbii/ODE-M}}
\endgroup

Conventional model merging typically considers a one-shot setting where multiple task-adapted models are available simultaneously. Among them, previous approaches have studied a broad spectrum of merging rules, ranging from simple weight-space combinations such as weight averaging~\citep{wortsman2022model} and task vector arithmetic~\citep{ilharco2022editing} to more structured approaches that improve compatibility via parameter alignment~\citep{ainsworth2022git} or reweighting~\citep{yadav2023resolving,yang2023adamerging}. By contrast, continual model merging considers a sequential setting in which task-adapted models arrive over time, and the merged model is updated iteratively after each new task~\citep{li2025became}. Recent methods designed for CMM achieve the sequential update either by extending task arithmetic to continually accumulate task vectors relative to a base model, or by introducing projection-based mechanisms that filter interfering components during each merge~\citep{tang2025merging,yang2025continual}.

Although these methods have taken the first step toward CMM, the core control problem in continual merging remains underexplored. At each merge step, the deployed model $\Psi_k$ already encodes the knowledge accumulated from previous tasks, while the incoming model $\psi_{k+1}$ brings expertise for a new task~\citep{yang2024model}. The update therefore needs to determine not only \emph{where} the next merged model should be, but also \emph{how much} previously accumulated knowledge should be preserved relative to the incoming task. This issue becomes more pronounced in long task streams, where repeated updates gradually amplify small allocation errors, and in heterogeneous-utility scenarios, where degradation on different tasks can have different practical costs, e.g., due to different data scales, user traffic, or reliability requirements~\citep{yadav2024matters}.

Existing CMM methods usually specify the update as a fixed rule that maps the current model $\Psi_k$ and the incoming model $\psi_{k+1}$ directly to the next model $\Psi_{k+1}$~\citep{yang2024model}. Such an endpoint-only view provides limited control over two important aspects of the merge. First, it does not explicitly regulate the transition taken between the current and incoming models; if the update passes through high-loss regions in parameter space, the resulting model may suffer from functional degradation and interference with previously merged behaviors~\citep{lee2025mitigating,hammoud2024model}. Second, it provides only a coarse handle on the stability--plasticity trade-off, since the relative contribution of the previous merged model and the incoming task model is implicitly determined by the merging rule rather than by an explicit operating point.

We address this limitation by viewing each continual merge as a controlled transition in parameter space. For a single update, let $\theta_0=\Psi_k$ denote the current merged model and $\theta_1=\psi_{k+1}$ denote the incoming task model. Instead of immediately committing to a single endpoint update, we construct a continuous trajectory from $\theta_0$ toward $\theta_1$ and select an operating point on this trajectory. This trajectory perspective exposes two control handles: the shape of the path controls whether the update avoids high-loss regions, while the operating time controls the relative amount of old and new knowledge incorporated into the next merged model.

Mode connectivity motivates the feasibility of this view. Prior work shows that independently trained solutions can often be connected by continuous curves along which the loss remains low, even when naive linear interpolation exhibits a significant loss barrier~\citep{vrabel2024input,ren2025revisiting,tran2025linear}. This suggests that the difficulty of merging is not solely determined by the endpoints, but also by the choice of the connecting path. However, existing connectivity-based methods typically construct such paths by explicitly parameterizing and optimizing a curve between two endpoints~\citep{khan2024sok}. While useful for offline analysis, repeatedly solving a separate path-optimization problem is less suitable for online continual merging, where new task models arrive sequentially and the merge procedure must remain lightweight.

To this end, we propose ODE-driven Merging (ODE-M), which generates the connecting path as the trajectory of a time-dependent velocity field. Starting from a base velocity that moves the current model toward the incoming model, ODE-M uses first-order feedback from a small calibration set to locally rectify the trajectory. Specifically, at each integration step, we decompose the velocity into a component aligned with the loss gradient and a component orthogonal to it, and dampen only the loss-increasing component. This produces a barrier-aware trajectory that controls loss escalation while still progressing toward the incoming model. The next merged model is then obtained by selecting an operating time along the trajectory, which naturally implements the desired allocation between retained historical knowledge and incoming task expertise.

To sum up, the main contributions are:
\begin{itemize}
    \item We formulate CMM from a trajectory perspective and extend the notion of loss barriers from linear interpolation to general connecting paths.
    \item We propose a novel barrier-aware ODE dynamics via a rectified velocity field, which regulates loss-increasing motion using lightweight first-order feedback.
    \item We validate our approach with extensive experiments and ablations, showing consistent gains over strong continual merging baselines across CLIP ViT backbones and stream lengths ( as shown in Fig. \ref{fig:comparasion}).
\end{itemize}

\begin{figure*}[t]
    \begin{subfigure}[b]{0.33\textwidth}
    \centering
    \includegraphics[width=0.98\textwidth]{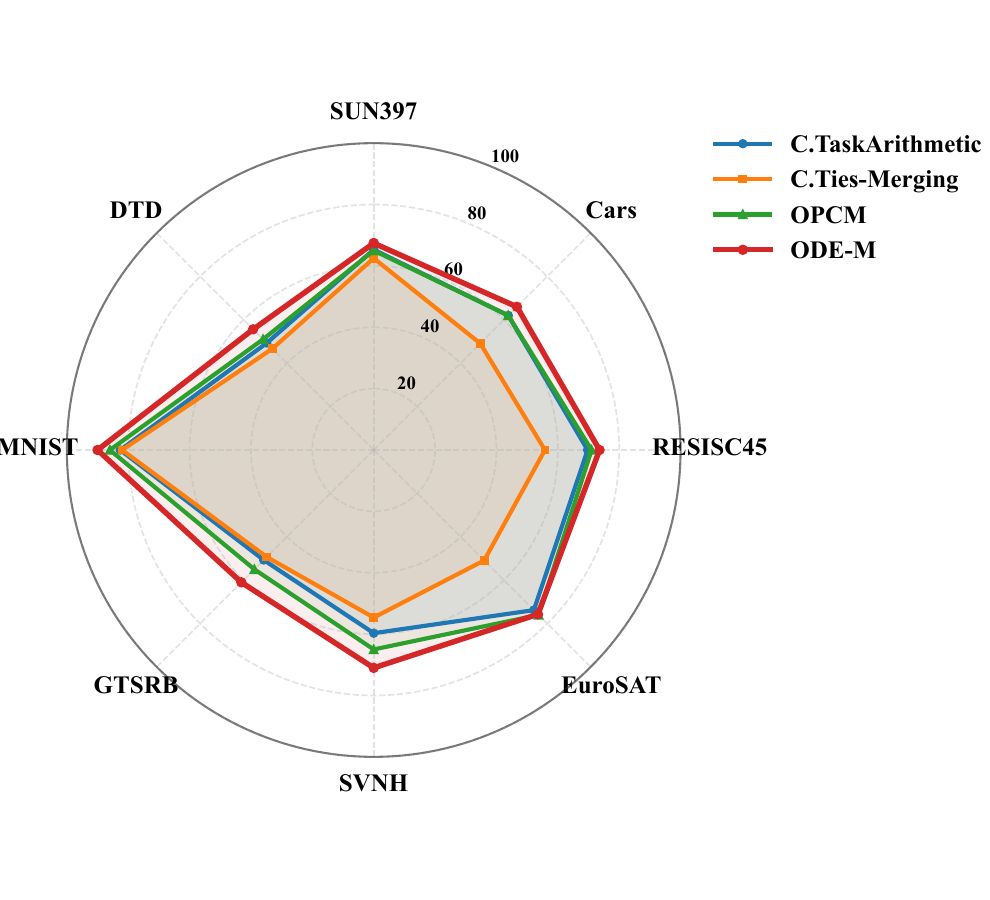}
    \caption{ViT-B/32}
    \end{subfigure} \hfill
    \begin{subfigure}[b]{0.33\textwidth}
    \centering
    \includegraphics[width=0.98\textwidth]{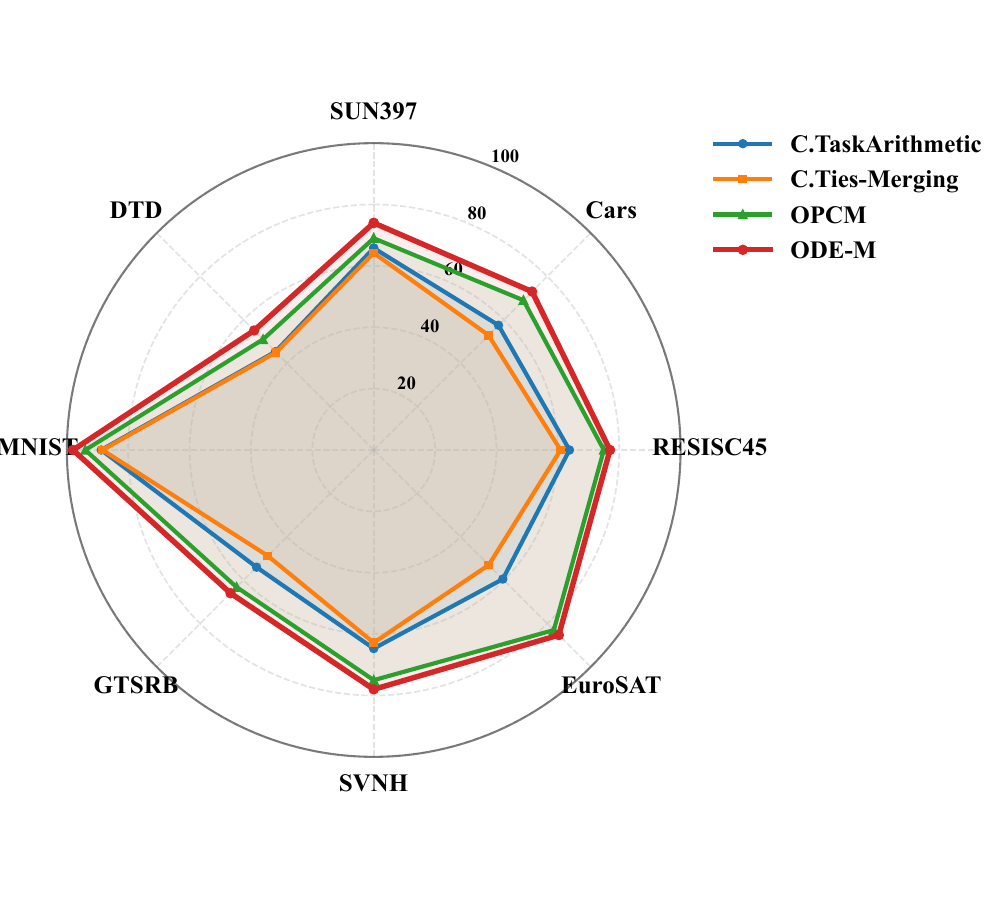}
    \caption{ViT-B/16}
    \end{subfigure}\hfill
    \centering
    \begin{subfigure}[b]{0.33\textwidth}
    \centering
    \includegraphics[width=0.98\textwidth]{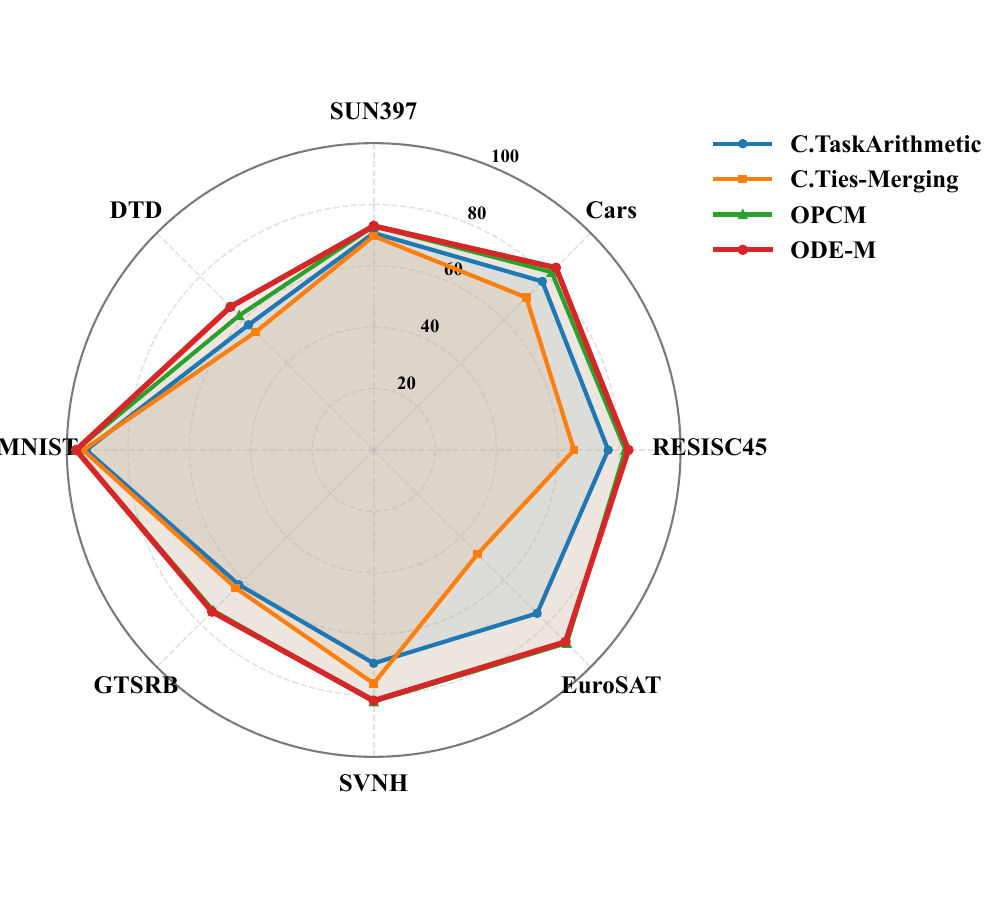}
    \caption{ViT-L/14}
    \end{subfigure}
    \caption{\textbf{Per-task performance on 8-task continual merging.} Radar plots show the per-task accuracy of different continual merging methods on an 8-task stream for three CLIP ViT backbones: (a) ViT-B/32, (b) ViT-B/16, and (c) ViT-L/14.}
    \label{fig:comparasion}
\end{figure*}

\section{Rethinking the CMM Objective}
\subsection{Problem Formulation}
In this paper, we consider a CMM setting in which task-adapted models arrive sequentially over time.
Let $\{\psi_k\}_{k=1}^K\subset\mathbb{R}^d$ denote a stream of trained models, where each $\psi_k$ is specialized for a task or domain $\mathcal{T}$ and all models share the same parameter space. We maintain a deployed merged model $\Psi_k$ after receiving the first $k$ models and update it online when a new model arrives:
\begin{equation}
    \Psi_{k+1}=\mathcal{M}(\Psi_k,\psi_{k+1},\psi_0),\quad \Psi_1=\psi_1
\end{equation}
where $\mathcal{M}$ denotes the merging algorithm, and $\psi_0$ denotes the weights of the shared pre-trained model.

\subsection{Objective Mismatch in CMM}\label{sec:weighted_objective}
Existing CMM benchmarks report performance by macro-averaging over the tasks observed so far. This convention is not merely a reporting choice. It implicitly fixes a particular objective. Specifically, macro-average is equivalent to evaluating the deployed model under a \emph{uniform mixture} over tasks, where each task is assumed to be equally likely to occur and equally costly to degrade.
While convenient, this is a strong modeling assumption and need not reflect the objective that a continual system ultimately optimizes.

 In contrast, a more general and faithful statement is to model system-level performance as an \emph{expected utility} under a non-uniform task mixture. Tasks can contribute unevenly for two simple and widely applicable reasons: \textbf{(1)} tasks may appear with different frequencies in the stream or at inference time, so errors on frequent tasks dominate the overall expected objective. \textbf{(2)} Even at the same frequency, different tasks can carry different importance, meaning that the same performance drop can have a different impact on the overall objective. Under either interpretation, treating tasks as one vote each can misrank models: two merged models may have the same macro-average yet exhibit substantially different system-level utility.

To make this heterogeneity explicit, we associate each task $\mathcal{T}_i$ with a non-negative weight $w_i$ that represents its relative contribution to the objective. Let $a_{k,i}$ denote the performance of the deployed model $\Psi_k$ on task $\mathcal{T}_i$ after merging the first $k$ tasks. We define the utility-weighted average performance as:
\begin{equation}
    \mathrm{ACC}_w(\Psi_k)=\frac{1}{\sum_{i=1}^k w_i}\sum_{i=1}^k w_i\, a_{k,i},
\end{equation}
and the utility-weighted backward transfer as
\begin{equation}
    \mathrm{BWT}_w(\Psi_k)=\frac{1}{\sum_{i=1}^{k-1} w_i}\sum_{i=1}^{k-1} w_i\,(a_{k,i}-a_{i,i}).
\end{equation}
The standard macro-average metrics are recovered as a special case when $w_i$ are uniform. This formulation clarifies what controllability should mean in continual merging.

\section{Continual Model Merging via ODE}

\subsection{Paths and Barriers}\label{sec:definition}

Motivated by the ODE perspective, we recast continual model merging as the problem of seeking an optimal continuous path in parameter space~\citep{daheim2023model}. Specifically, for a single update step between two models, we denote the currently deployed model $\Psi_k$ by $\theta_0$ and the newly arrived task-adapted model $\psi_{k+1}$ by $\theta_1$.

By contrast, continual model merging poses distinct challenges because tasks arrive sequentially and independently. Unlike the isolated merging of two models, continual merging must incorporate each new model while preserving the capabilities of all previously merged models; otherwise, catastrophic forgetting can occur. However, most existing model merging methods still focus on identifying an optimal fusion strategy for a single pair of models at a time. Consequently, they do not explicitly treat all models as a unified whole and offer limited control over each model’s contribution during the merging process, which can exacerbate forgetting.

\begin{figure}
    \centering
    \includegraphics[width=\linewidth]{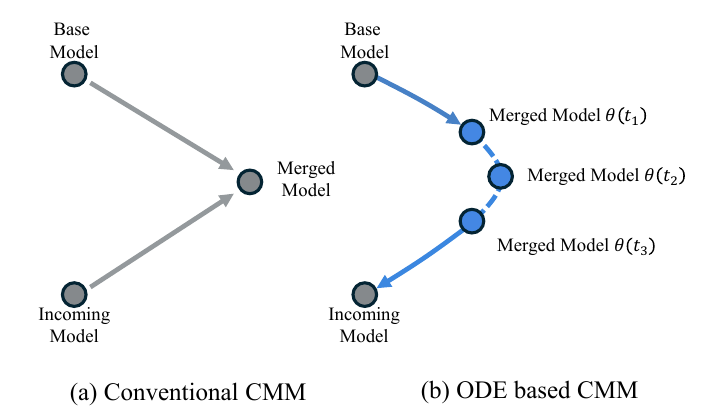}
    \caption{\textbf{Conventional vs.\ ODE-based CMM.} Conventional CMM merges two models in a single step, whereas ODE-based CMM follows a continuous trajectory between endpoints and selects operating points $\theta(t)$ along the path for controllable updates.}
    \label{fig:compare}
\end{figure}

In contrast, this paper formulates CMM as the problem of searching for an optimal merging path in parameter space that jointly accounts for all models, as illustrated in Fig.~\ref{fig:compare}. This formulation enables explicit and fine-grained controllability over the merging process and thereby helps alleviate model forgetting. Concretely, we seek to construct a continuous path in parameter space:
\begin{equation}
    \theta:[0,1]\to \mathbb{R}^d,\quad \theta(0)=\theta_0,\theta(1)=\theta_1,
\end{equation}
With this formulation, each update can be performed in a controlled manner while avoiding high-loss regions during the transition.

\begin{definition}[Loss Barrier]
    Prior work~\citep{entezari2021role} defines the \emph{loss barrier} $B(\theta_0,\theta_1)$ along the linear path between $\theta_0$ and $\theta_1$ as the largest deviation between the loss on the interpolated parameters and the linear interpolation of the endpoint loss:
    \begin{equation}
        \sup_t [\mathcal{L}(t \theta_1 + (1-t)\theta_0)]-[t \mathcal{L}(\theta_1) + (1-t)\mathcal{L}(\theta_0)].
    \end{equation}
\end{definition}

To enable controllable updates, we do not restrict the transition to linear interpolation. We therefore extend the above notion to an arbitrary continuous connecting path.

\begin{definition}[Loss Barrier along Path]\label{def:bound}
    Given a continuous path $\theta:[0,1]\to \mathbb{R}^d$ with $\theta(0)=\theta_0$ and $\theta(1)=\theta_1$, we define the loss barrier of $\theta(\cdot)$ as the maximum excess loss encountered along the path relative to the linear interpolation of the endpoint losses:
    \begin{equation}
        B(\theta)=\sup_t [\mathcal{L}(\theta(t))] - [t \mathcal{L}(\theta_1) + (1-t)\mathcal{L}(\theta_0)].
    \end{equation}
\end{definition}
With this definition, we seek to construct an update path that keeps the transition stable by minimizing its loss barrier:
\begin{equation}
    \min_{\theta(\cdot)} B(\theta)\quad \text{s.t.}\quad \theta(0)=\theta_0,\ \theta(1)=\theta_1.
\end{equation}
Once such a low-barrier path is available, CMM reduces to selecting an operating point along it, which provides a direct control handle over the update trade-off. To instantiate this principle, we next construct a continuous velocity field whose induced trajectory yields a barrier-aware path.

\subsection{Constructing the Velocity Field}\label{sec:velocity_field}
We start from the simplest case where the connecting path between the two endpoints is the linear interpolation. In this case, the path can be written as:
\begin{equation}
    \theta(t)=t \theta_1 + (1-t)\theta_0.
\end{equation}
This representation immediately induces a constant velocity field along the path:
\begin{equation}
    \frac{\mathrm{d}\theta}{\mathrm{d}t}=\theta_1-\theta_0.
\end{equation}
We then extend this velocity to the entire parameter space by defining a base vector field:
\begin{equation}
    \frac{\partial \theta}{\partial t}=u_t(\theta)=\alpha(t)(\theta_1-\theta),
\end{equation}
where $\alpha(t)$ serves as a velocity scheduler, and we set $\alpha(t)=\frac{1}{1-t}$ to ensure that the vector field yields a constant velocity.

However, we find that strictly following the aforementioned linear path ignores the underlying loss landscape $\mathcal{L}(\theta)$, leading to high loss barriers and, in turn, forgetting. To address this issue, we explicitly decouple geometric convergence from loss dynamics. Let $g_t=\nabla_{\theta_t} \mathcal{L}$ denote the loss gradient. We decompose $u_t$ by projecting it onto the subspace spanned by $g_t$. Defining the projection operator $\mathcal{P}_{g_t}(\cdot)$, the decomposition is:
\begin{equation}
    u_{\parallel}=\mathcal{P}_{g_t} (u_t)\triangleq \frac{\langle u_t, g_t\rangle}{\|g_t\|^2}g_t,\ \text{and} \quad u_{\perp}=(I-\mathcal{P}_{g_t})u_t,
\end{equation}
where $u_{\parallel}$ is the component that changes the loss most directly, whereas $u_{\perp}$ corresponds to transport along the tangent space of the local iso-loss manifold. To reconcile the geometric objective with the loss constraints, we introduce an adaptive factor $\gamma(\theta, t)\in [0,1]$ to modulate the gradient-aligned component. The resulting dynamics are:
\begin{equation}
    v_t(\theta)=u_{\perp}+\gamma(\theta, t) u_{\parallel}.
\end{equation}

\begin{figure}[htb]
  \centering
  \begin{subfigure}[t]{0.49\linewidth}
    \centering
    \includegraphics[width=\linewidth]{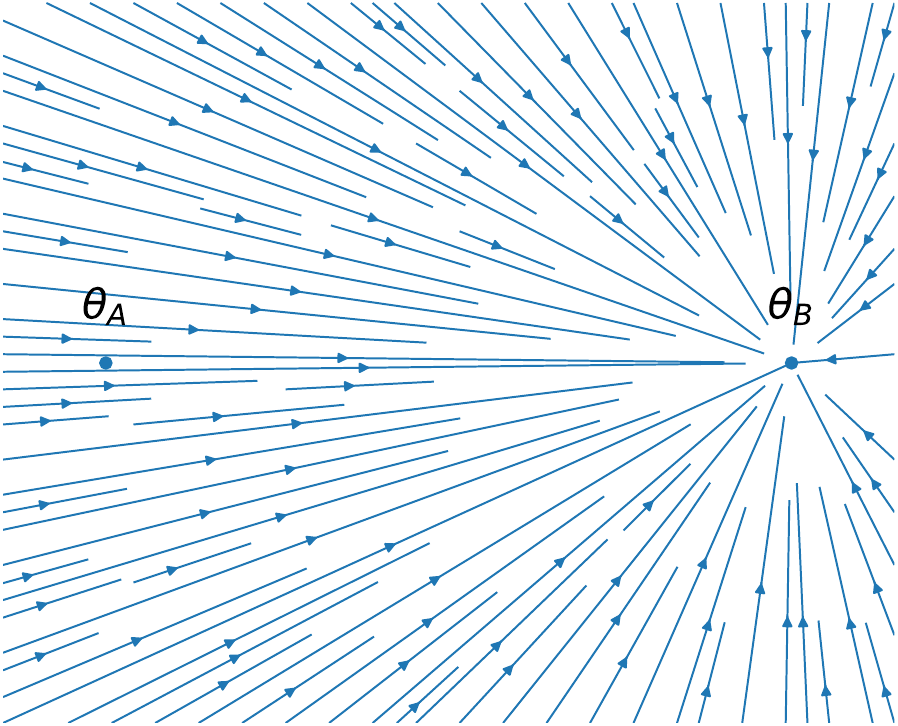}

    \label{fig:left}
  \end{subfigure}
  \hfill
  \begin{subfigure}[t]{0.49\linewidth}
    \centering
    \includegraphics[width=\linewidth]{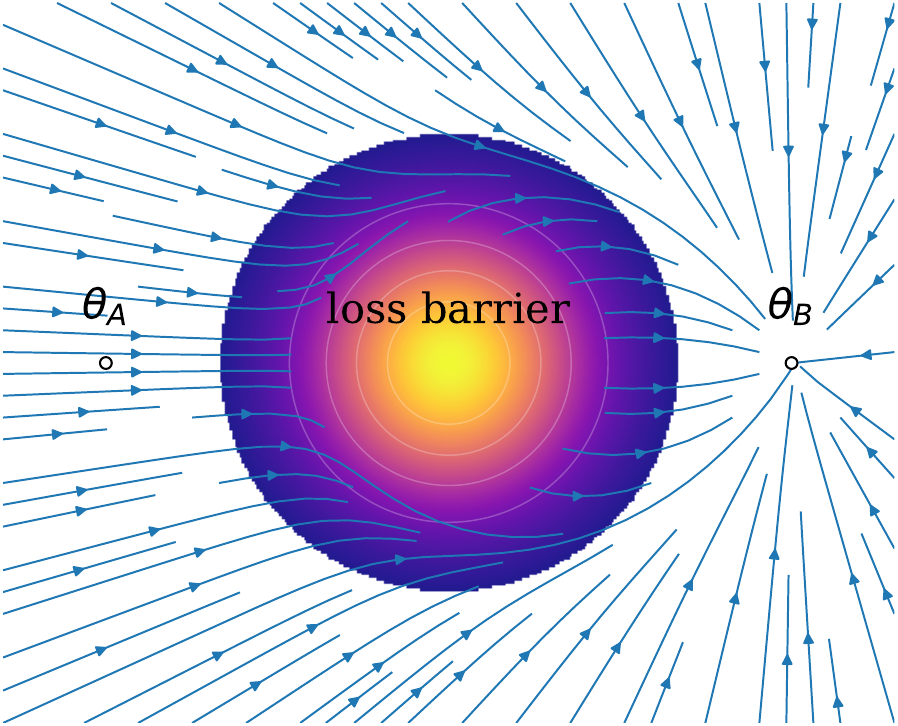}

    \label{fig:right}
  \end{subfigure}
  \caption{\textbf{Barrier-aware rectification of the velocity field.} Left: the base field $u_t(\theta)$ induces an unconstrained transport toward the target model $\theta_1$. Right: after gradient projection and adaptive damping, the rectified field $v_t(\theta)$ suppresses loss-increasing components, yielding trajectories that avoid high-loss (barrier) regions while still progressing toward $\theta_1$.}
  \label{fig:velocity_field}
\end{figure}

We define the update path as the ODE trajectory induced by this rectified field, i.e., by integrating $\frac{\partial\theta}{\partial t}=v_t(\theta)$ over $t\in[0,1)$ with $\theta(0)=\theta_0$. Fig.~\ref{fig:velocity_field} illustrates the induced transport before and after rectification. Specifically, we determine $\gamma(\theta,t)$ by evaluating the alignment between the instantaneous loss gradient $g_t$ and the velocity field $u_t$. The inner product $\langle g_t,u_t\rangle$ serves as a local indicator of the directional derivative along the trajectory:
\begin{itemize}
    \item $\langle g_t, u_t\rangle \le 0$: The velocity toward $\theta_1$ decreases or maintains the loss. In this regime, the geometric objective aligns with the optimization objective, allowing for unrestricted movement. We have:
    \begin{equation}
        \gamma(\theta,t)=1,\ \text{if}\ \langle g_t, u_t \rangle\le 0.
    \end{equation}
    \item $\langle g_t, u_t\rangle > 0$: Moving toward the target necessitates climbing the loss surface, contributing to the loss barrier. Here, the velocity component $u_{\parallel}$ must be damped to ensure the loss increase remains within a prescribed budget. Let $\Delta \mathcal{L}=\mathcal{L}(\theta_1)-\mathcal{L}(\theta_0)$, and we have:
    \begin{equation}
        \gamma(\theta, t)=\mathrm{clip}\left(\frac{\Delta \mathcal{L}}{\langle g_t, u_t\rangle}, 0, 1 \right),\ \text{if}\ \langle g_t, u_t \rangle >0.
    \end{equation}
\end{itemize}
Integrating $\frac{\partial \theta}{\partial t}=v_t(\theta)$ from $\theta(0)=\theta_0$ yields a trajectory toward $\theta_1$ whose loss-increasing component is explicitly regulated by $\gamma(\theta, t)$.

\subsection{Theoretical Analysis}
\label{sec:TA}

In this section, we justify that the ODE-induced trajectory produced by our rectified velocity field is a valid connecting path for continual model merging. Concretely, we establish two properties. First, the trajectory admits an endpoint extension and converges to the target model $\theta_1$ as $t\to 1$. Second, along this trajectory, the loss barrier (as defined in Section~\ref{sec:definition}) is bounded, implying that the transition does not traverse arbitrarily high-loss regions.

\begin{theorem}[Convergence of Velocity Field]
    Let $\theta(t)$ be the solution of the proposed dynamics $v_t(\theta)$ on $t\in[0,1)$ with $\theta(0)=\theta_0$ and target $\theta_1$. Assume that $g_t=\nabla\mathcal{L}(\theta(t))\neq 0$ for almost every $t\in[0,1)$. Moreover, assume there exists a constant $\rho\in[0,1)$ such that, along the trajectory, for all $t\in[0,1)$, $\frac{\|\mathcal{P}_{g_t}(u_t)\|}{\|u_t\|}\le \rho$. Then the trajectory admits a continuous endpoint extension at $t=1$ and converges to the target model:
    \begin{equation}
        \lim_{t\to 1}\theta(t)=\theta_1.
    \end{equation}
\end{theorem}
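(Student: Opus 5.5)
Write the error $e(t)=\theta_1-\theta(t)$ and study the scalar Lyapunov function $V(t)=\tfrac12\|e(t)\|^2$. The aim is a differential inequality of the form $\dot V\le -c\,\alpha(t)V$ with $c>0$, where $\alpha(t)=\frac{1}{1-t}$. Integrating this gives a rate like $(1-t)^{c}$ that vanishes at $t=1$.

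\textbf{Step 1: compute $\dot V$.} Since $u_t=\alpha(t)e$, we have $\dot e=-v_t$, and the decomposition $v_t=u_t-(1-\gamma)u_\parallel$ gives
\[
\dot V=-\langle e,v_t\rangle=-\alpha(t)\|e\|^2+(1-\gamma)\langle e,u_\parallel\rangle.
\]
Because $\mathcal{P}_{g_t}$ is an orthogonal projection,
\[
\langle e,u_\parallel\rangle=\alpha(t)^{-1}\langle u_t,\mathcal{P}_{g_t}u_t\rangle=\alpha(t)^{-1}\|\mathcal{P}_{g_t}u_t\|^2.
\]
The hypothesis $\|\mathcal{P}_{g_t}u_t\|\le\rho\|u_t\|$ then bounds this by $\alpha(t)\rho^2\|e\|^2$. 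Using $\gamma\in[0,1]$,
\[
\dot V\le -2\alpha(t)(1-\rho^2)V.
\]
This holds for almost every $t$, where $g_t\neq0$ and the projection is defined. On the null set where $g_t=0$ the dynamics can be taken as $v_t=u_t$, which only helps. Since $V$ is absolutely continuous along a Carathéodory solution, the inequality integrates.

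\textbf{Step 2: integrate.} Grönwall's inequality gives
\[
V(t)\le V(0)\exp\Big(-2(1-\rho^2)\int_0^t\tfrac{ds}{1-s}\Big)=V(0)(1-t)^{2(1-\rho^2)}.
\]
Hence $\|\theta(t)-\theta_1\|\le\|\theta_0-\theta_1\|(1-t)^{1-\rho^2}\to0$, because $\rho<1$. This yields the limit, and setting $\theta(1)=\theta_1$ gives the continuous extension. To show the extension is rectifiable rather than merely continuous, note that $\|v_t\|\le\alpha\|e\|\le C(1-t)^{-\rho^2}$. This is integrable on $[0,1)$, so $\theta$ is even absolutely continuous on $[0,1]$.

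\textbf{Main obstacle.} I expect the main difficulty to be well-posedness on $[0,1)$ rather than the estimate itself. The field $v_t$ can be discontinuous: $\gamma$ involves a clip and a case split, and $\mathcal{P}_{g_t}$ is singular where $g_t=0$. I would treat the statement as assuming a solution exists, which is how it is phrased ("Let $\theta(t)$ be the solution"). On any compact $[0,T]$ with $T<1$, the bound $\|v_t\|\le\alpha(t)\|e\|$ together with Step 2 rules out finite-time blow-up, so the solution can be continued up to $t=1$. The only place the structure of $\gamma$ enters is through $0\le\gamma\le1$, which makes the estimate robust to the particular damping rule.
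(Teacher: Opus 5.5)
Your proof is correct and follows the same route as the paper's: a Lyapunov argument on the squared error using $v_t=u_t-(1-\gamma)u_\parallel$, followed by Gr\"onwall with $\alpha(t)=1/(1-t)$. The one difference is that you evaluate the cross term exactly, via $\langle e,u_\parallel\rangle=\alpha^{-1}\|\mathcal{P}_{g_t}u_t\|^2$, instead of bounding it by Cauchy--Schwarz as the paper does. This gives the sharper rate $(1-t)^{1-\rho^2}$ in place of the paper's $(1-t)^{1-\rho}$, and your integrating-factor form of Gr\"onwall also avoids the paper's division by $V$, which tacitly requires $V>0$.
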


\begin{theorem}[Bounded Loss Barrier]
    Let $\theta(t)$ be the trajectory generated by $\dot\theta(t)=v_t(\theta(t))$ on $t\in[0,1)$ with $\theta(0)=\theta_0$ and a continuous endpoint extension satisfying $\theta(1)=\theta_1$, and assume that $\gamma(\theta,t)$ and $\mathcal{L}$ are differentiable. Then for all $t\in[0,1)$, we have:
    \begin{equation}
        \mathcal L(\theta(t)) \le \mathcal L(\theta_0) + t\cdot \max\{\Delta\mathcal L,0\}.
    \end{equation}
    Consequently, the loss barrier along the path satisfies:
    \begin{equation}
        B(\theta)\le \max\{\mathcal L(\theta_0)-\mathcal L(\theta_1), 0\}.
    \end{equation}
\end{theorem}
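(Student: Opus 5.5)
The plan is to differentiate the loss along the trajectory and bound its rate of change pointwise, then integrate. Set $\ell(t)=\mathcal L(\theta(t))$. Since $\mathcal L$ is differentiable and $\theta$ solves $\dot\theta=v_t(\theta)$,
\begin{equation*}
\dot\ell(t)=\langle g_t, v_t\rangle=\langle g_t,u_\perp\rangle+\gamma\,\langle g_t,u_\parallel\rangle .
\end{equation*}
First, $\langle g_t,u_\perp\rangle=0$ because $u_\perp=(I-\mathcal P_{g_t})u_t$ is orthogonal to $g_t$. Second, $\langle g_t,u_\parallel\rangle=\langle g_t,u_t\rangle$ because $\mathcal P_{g_t}$ is self-adjoint and idempotent. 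At points where $g_t=0$ the projection is taken to be zero, and there $\dot\ell=0$ trivially. This gives the key identity
\begin{equation*}
\dot\ell(t)=\gamma(\theta(t),t)\,\langle g_t,u_t\rangle .
\end{equation*}

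Next, I split into the two regimes that define $\gamma$.
\begin{itemize}
\item If $\langle g_t,u_t\rangle\le 0$, then $\gamma=1$ and $\dot\ell\le 0\le\max\{\Delta\mathcal L,0\}$.
\item If $\langle g_t,u_t\rangle>0$, then $\gamma=\mathrm{clip}(\Delta\mathcal L/\langle g_t,u_t\rangle,0,1)$, and three subcases follow:
  \begin{itemize}
  \item when $\Delta\mathcal L\le 0$, the clip gives $\gamma=0$ and $\dot\ell=0$;
  \item when $0<\Delta\mathcal L<\langle g_t,u_t\rangle$, we get $\dot\ell=\Delta\mathcal L$;
  \item otherwise $\gamma=1$ and $\dot\ell=\langle g_t,u_t\rangle\le\Delta\mathcal L$.
  \end{itemize}
\end{itemize}
In every case $\dot\ell(t)\le\max\{\Delta\mathcal L,0\}$ for all $t\in[0,1)$.

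Integrating this from $0$ to $t$ yields $\mathcal L(\theta(t))\le\mathcal L(\theta_0)+t\max\{\Delta\mathcal L,0\}$. The fundamental theorem of calculus applies because $\ell$ is differentiable on $[0,t]$ with derivative bounded above; a mean value argument suffices without needing integrability.

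For the barrier, I compare this bound with the reference line $t\mathcal L(\theta_1)+(1-t)\mathcal L(\theta_0)=\mathcal L(\theta_0)+t\Delta\mathcal L$:
\begin{equation*}
\ell(t)-\bigl[\mathcal L(\theta_0)+t\Delta\mathcal L\bigr]\le t\bigl(\max\{\Delta\mathcal L,0\}-\Delta\mathcal L\bigr)=t\max\{-\Delta\mathcal L,0\}\le\max\{\mathcal L(\theta_0)-\mathcal L(\theta_1),0\}
\end{equation*}
for $t\in[0,1)$. At $t=1$ the excess equals $0$ by the continuous endpoint extension $\theta(1)=\theta_1$, and continuity of $\mathcal L$ makes it consistent. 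Taking the supremum over $t$ gives the stated bound on $B(\theta)$.

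The main obstacle is the rigor of the pointwise derivative step. Specifically, one must make sure the projection is well defined where $g_t=0$, and that $\gamma$ (a clip, which is not everywhere differentiable) does not break the chain rule. Only differentiability of $\mathcal L$ and absolute continuity of $\theta$ are actually needed, and the assumed differentiability of $\gamma$ is used only to guarantee that the ODE solution exists. I would handle $g_t=0$ points by noting $\dot\ell=0$ there, so the bound holds everywhere the derivative exists.
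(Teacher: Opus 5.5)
Your proof is correct and follows the paper's argument essentially step for step: the identity $\frac{\mathrm d}{\mathrm dt}\mathcal L(\theta(t))=\gamma_t\langle g_t,u_t\rangle$ from the orthogonality of $u_\perp$ to $g_t$, the case split on the signs of $\langle g_t,u_t\rangle$ and $\Delta\mathcal L$ giving the rate bound $\max\{\Delta\mathcal L,0\}$, integration of that bound, and the same comparison against the line $\mathcal L(\theta_0)+t\Delta\mathcal L$ for the barrier. Your additions make precise points the paper leaves implicit: the convention at $g_t=0$, the mean-value step in place of integrating an a.e.\ bound (which relies on $\theta$ being an everywhere-differentiable classical solution), and the explicit check that the excess vanishes at $t=1$.
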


Detailed Proofs are provided in Appendix~\ref{app:proof}.

\subsection{Time Scheduling along the Trajectory}\label{sec:time_scheduling}

Determining how far to move along the trajectory is crucial for controlling the stability--plasticity trade-off.
A smaller operating time keeps the merged model closer to the previously accumulated model, thereby preserving more historical knowledge, whereas a larger operating time assigns more influence to the incoming task model.
Therefore, the operating time should reflect the relative contribution of the incoming task within the current stream.

Let $w_k$ denote the utility weight of the $k$-th task, and let
$W_{k-1}=\sum_{i=1}^{k-1}w_i$ denote the accumulated utility of previously merged tasks.
Since the current model $\Psi_{k-1}$ summarizes the knowledge from the first $k-1$ tasks, the incoming model $\psi_k$ should contribute in proportion to its relative utility within the prefix.
This leads to the following utility-aware time schedule:
\begin{equation}
    t_k = \frac{w_k}{W_{k-1}+w_k}
    = \frac{w_k}{\sum_{i=1}^k w_i},
    \quad
    \Psi_k = \theta(t_k).
    \label{eq:18}
\end{equation}
In the standard equal-utility setting, where $w_1=\cdots=w_k$, Eq.~\ref{eq:18} reduces to $t_k=1/k$.
Thus, as more tasks are merged, the incoming task receives a diminishing operating time, while the retained contribution of the previous merged model increases as $(k-1)/k$.
This schedule turns the integration time into an explicit control handle for allocating contribution between accumulated historical knowledge and incoming task expertise.
We empirically validate this behavior in Appendix~\ref{app:exp} by showing that the retained contribution implied by the empirically selected operating times strongly correlates with the equal-utility retention ratio $(k-1)/k$.

\section{Experiments}
In this section, we conduct extensive experiments in the continual model merging setting. We further perform ablation studies to analyze the behavior of our method.

\begin{table*}[ht]
  \centering
  \caption{Comparison of continual merging performance on CLIP ViT backbones. We report the final macro-averaged accuracy (ACC) and backward transfer (BWT) after merging task streams of length 8, 14, and 20 on ViT-B/32, ViT-B/16, and ViT-L/14. The prefix ``C.'' indicates continual variants.}
  \label{tab:clip_vit}
  \vskip 0.1in
  \small
  \setlength{\tabcolsep}{4pt}
  \begin{tabular}{ccccccccccc}
    \toprule
                                                             & \multirow{3}{*}{Methods} & \multicolumn{3}{c}{ViT-B/32} & \multicolumn{3}{c}{ViT-B/16} & \multicolumn{3}{c}{ViT-L/14}                                                                                                                                                 \\
    \cmidrule(lr){3-5} \cmidrule(lr){6-8} \cmidrule(lr){9-11}
                                                             &                         & 8 tasks                      & 14 tasks                     & 20 tasks                     & 8 tasks               & 14 tasks              & 20 tasks              & 8 tasks               & 14 tasks              & 20 tasks              \\
    \midrule
                                                             & Pre-Trained             & 48.1                         & 56.9                         & 55.6                         & 55.4                  & 62.0                  & 59.8                  & 64.9                  & 69.1                  & 65.6                  \\
                                                             & Fine-Tuned              & 90.4                         & 89.3                         & 89.8                         & 92.4                  & 91.3                  & 91.6                  & 94.3                  & 93.4                  & 93.5                  \\
                                                             & C. Fine-Tuned           & 79.8                         & 67.4                         & 62.6                         & 82.9                  & 72.2                  & 68.2                  & 90.0                  & 70.9                  & 77.7                  \\
    \midrule
    \multirow{4}{*}{\rotatebox[origin=c]{90}{ACC$\uparrow$}} & Average (SWA)           & 66.3$\pm$0.0                 & 65.4$\pm$0.0                 & 61.1$\pm$0.0                 & 72.3$\pm$0.0          & 69.7$\pm$0.0          & 64.8$\pm$0.0          & 80.0$\pm$0.0          & 77.5$\pm$0.0          & 71.1$\pm$0.0          \\
                                                             & C. Task Arithmetic      & 67.5$\pm$0.0                 & 66.5$\pm$0.0                 & 60.6$\pm$0.0                 & 77.1$\pm$0.0          & 70.9$\pm$0.0          & 64.2$\pm$0.0          & 82.1$\pm$0.0          & 77.9$\pm$0.0          & 70.3$\pm$0.0          \\
                                                             & C. Ties-Merging         & 49.0$\pm$10.2                & 66.2$\pm$0.6                 & 59.9$\pm$0.7                 & 66.8$\pm$3.7          & 70.5$\pm$0.8          & 63.0$\pm$1.6          & 64.3$\pm$7.0          & 78.0$\pm$0.6          & 68.3$\pm$0.9          \\
                                                             & OPCM     & {75.5}$\pm$0.5        & {71.9}$\pm$0.3        & {65.7}$\pm$0.2        & {81.8}$\pm$0.3 & {77.1}$\pm$0.5 & {70.3}$\pm$0.2 & {87.0}$\pm$0.4 & {83.5}$\pm$0.2 & {76.0}$\pm$0.2 \\
                                                             & \textbf{\ourmethod (Ours)}     & \textbf{79.6}$\pm$1.9        & \textbf{74.1}$\pm$1.5        & \textbf{67.1}$\pm$0.8        & \textbf{85.3}$\pm$0.6 & \textbf{78.5}$\pm$0.9 & \textbf{70.4}$\pm$0.5 & \textbf{87.7}$\pm$0.9 & \textbf{85.8}$\pm$1.2 & \textbf{81.1}$\pm$1.0 \\
    \midrule
    \multirow{3}{*}{\rotatebox[origin=c]{90}{BWT$\uparrow$}} & Average (SWA)           & -11.5$\pm$2.2                & -8.0$\pm$1.3                 & -7.1$\pm$2.1                 & -9.7$\pm$1.5          & -7.1$\pm$1.4          & -7.3$\pm$1.7          & -7.3$\pm$1.4          & -5.8$\pm$1.0          & -6.4$\pm$1.5          \\
                                                             & C. Task Arithmetic      & -9.6$\pm$1.5                 & -1.3$\pm$0.3                 & -3.4$\pm$0.4                 & {-4.2}$\pm$1.0 & -1.3$\pm$0.4          & -3.6$\pm$0.4          & -7.1$\pm$0.8          & -1.8$\pm$0.3          & -3.3$\pm$0.3          \\
                                                             & C. Ties-Merging         & -15.3$\pm$8.0                & {1.9}$\pm$0.6         & {-1.5}$\pm$0.7        & -5.5$\pm$0.4          & {1.4}$\pm$0.7  & {-1.5}$\pm$1.2 & -13.0$\pm$5.7         & {1.1}$\pm$0.4  & {-2.9}$\pm$1.0 \\
                                                             & {OPCM}    & {-6.3}$\pm$1.1        & -6.0$\pm$1.0                 & -7.8$\pm$1.5                 & -4.8$\pm$0.7          & -5.1$\pm$1.4          & -6.3$\pm$3.8          & {-2.6}$\pm$1.0 & -4.3$\pm$0.7          & -6.5$\pm$1.8          \\
                                                             & \textbf{\ourmethod (Ours)}    & {-4.2}$\pm$3.9        & -7.8$\pm$3.2                 & -9.4$\pm$2.2                 & -6.6$\pm$2.8          & -8.1$\pm$3.0          & -9.9$\pm$3.5          & {-4.0}$\pm$2.4 & -7.4$\pm$2.5          & -10.3$\pm$2.7          \\
    \bottomrule
  \end{tabular}
\end{table*}

\subsection{Experimental Setup}
\textbf{Models and Datasets}. Following OPCM~\citep{tang2025merging}, we consider CLIP vision backbones~\citep{radford2021learning} with three architectures: ViT-B/32, ViT-B/16, and ViT-L/14. We obtain the stream of task-adapted models and the corresponding datasets from FusionBench~\citep{tang2024fusionbench} by fine-tuning each pre-trained backbone sequentially on 20 tasks, and evaluate continual model merging on task streams of length $8$, $14$, and $20$. Additional experimental details are provided in Appendix~\ref{app:exp}.

\textbf{Evaluation Metrics}. We evaluate continual model merging using two standard metrics: average accuracy (ACC) and backward transfer (BWT)~\citep{lin2022beyond}, which quantify overall performance and retention of earlier tasks after sequential updates.

\textbf{Baselines}. We compare against four representative continual merging baselines, including parameter Averaging (SWA), Continual Task Arithmetic~\citep{ilharco2022editing}, Continual Ties-Merging~\citep{yadav2023ties}, and OPCM~\citep{tang2025merging}.

\textbf{Implementation Details}. We maintain a small calibration dataset that aggregates samples from previously seen tasks, with a total of 1024 examples. We integrate the proposed ODE using the Euler method with a fixed step size of $0.05$.

\subsection{CMM Performance on CLIP ViT Backbones}
We evaluate ODE-M on CLIP vision backbones, including ViT-B/32, ViT-B/16, and ViT-L/14. For each backbone, we construct a set of task-specific fine-tuned checkpoints starting from the same CLIP pre-trained weights and evaluate continual merging on task-stream prefixes of length $8$, $14$, and $20$. To account for stochasticity induced by task ordering, we repeat each setting over $10$ randomly permuted task sequences and report aggregated performance.

Table~\ref{tab:clip_vit} reports the performance of different continual merging rules under varying stream lengths and backbones, measured by average accuracy and backward transfer. Across all three architectures, ODE-M achieves the highest overall accuracy and ranks first for every stream length. For example, on ViT-B/32, ODE-M improves ACC from $75.5$ to $79.6$ on 8 tasks, from $71.9$ to $74.1$ on 14 tasks, and from $65.7$ to $67.1$ on 20 tasks, compared to the strongest baseline OPCM. Similar gains hold on ViT-B/16, where ODE-M reaches $85.3$, $78.5$, and $70.4$ ACC for 8/14/20 tasks, and on ViT-L/14, where it attains $87.7$, $85.8$, and $81.1$ ACC, indicating that the advantage persists as model capacity increases.

In terms of forgetting, all methods exhibit negative BWT as the stream grows, and ODE-M remains competitive with or better than prior baselines in most settings, especially for shorter streams. While improving ACC can come with a more aggressive update and thus more negative BWT in some long-stream configurations, the overall results suggest that our method provides a favorable accuracy--stability trade-off across architectures and task horizons.

\subsection{CMM under Heterogeneous Task Utility}\label{sec:task_adapted}
To study continual model merging beyond the macro-averaged convention, we introduce heterogeneous task utility by assigning each task $\mathcal{T}_i$ a non-negative importance weight $w_i$. In each run, we randomly sample $\{w_i\}$ from a fixed distribution and normalize them so that $\sum_i w_i=1$, then evaluate continual merging under the resulting non-uniform task mixture using the weighted metrics defined in Section~\ref{sec:weighted_objective}. We repeat this procedure over $10$ independent draws of $\{w_i\}$. For a fair comparison, we adjust all baselines to optimize the same utility-weighted objective. In particular, we modify their sequential combination coefficients to respect the task weights $\{w_i\}$, ensuring that each method is evaluated under an identical notion of task utility rather than benefiting from a mismatched uniform objective. Implementation details of these utility-aware baseline variants are provided in Appendix~\ref{app:baseline}.

Table~\ref{tab:utility_cmm_clip_vit} summarizes the utility-weighted results. Across all backbones and stream lengths, ODE-M consistently achieves the highest $\mathrm{ACC}_w$, indicating that it better optimizes the non-uniform task mixture induced by $\{w_i\}$. In particular, compared to the strongest baseline OPCM, ODE-M improves $\mathrm{ACC}_w$ from $74.2$ to $77.9$ on ViT-B/32 with 8 tasks, from $81.0$ to $84.6$ on ViT-B/16 with 8 tasks, and yields a pronounced gain on the longest ViT-L/14 stream, boosting $\mathrm{ACC}_w$ from $75.0$ to $80.0$ at 20 tasks. Beyond average utility, ODE-M also maintains competitive retention under heterogeneous utility. While some baselines can attain less negative $\mathrm{BWT}_w$ in certain configurations, they do so at a clear cost in $\mathrm{ACC}_w$ under the same weighted objective. By contrast, ODE-M preserves a favorable utility-stability trade-off, and improves upon OPCM in $\mathrm{BWT}_w$ across all settings (e.g., from $-5.9$ to $-4.8$ on ViT-B/32 with 8 tasks and from $-5.9$ to $-4.8$ on ViT-L/14 with 20 tasks).

\begin{table*}[t]
  \centering
  \caption{Utility-weighted continual merging results under heterogeneous task utility. We report weighted average accuracy (ACC$_w$) and weighted backward transfer (BWT$_w$) after 8-, 14-, and 20-task streams on CLIP ViT backbones. The prefix ``C.'' indicates continual variants.}
  \label{tab:utility_cmm_clip_vit}
  \vskip 0.08in
  \small
  \setlength{\tabcolsep}{4pt}
  \begin{tabular}{ccccccccccc}
    \toprule
    & \multirow{3}{*}{Methods}
    & \multicolumn{3}{c}{ViT-B/32}
    & \multicolumn{3}{c}{ViT-B/16}
    & \multicolumn{3}{c}{ViT-L/14} \\
    \cmidrule(lr){3-5} \cmidrule(lr){6-8} \cmidrule(lr){9-11}
    & & 8 tasks & 14 tasks & 20 tasks
      & 8 tasks & 14 tasks & 20 tasks
      & 8 tasks & 14 tasks & 20 tasks \\
    \midrule

    \multirow{5}{*}{\rotatebox[origin=c]{90}{ACC$_w\uparrow$}}
      & Average (SWA)
      & 65.1$\pm$0.9 & 64.3$\pm$0.8 & 60.2$\pm$0.7
      & 71.5$\pm$0.9 & 69.2$\pm$0.8 & 63.7$\pm$0.8
      & 79.2$\pm$0.8 & 76.8$\pm$0.7 & 70.0$\pm$0.7 \\
      & C. Task Arithmetic
      & 66.8$\pm$0.8 & 65.7$\pm$0.9 & 60.0$\pm$0.8
      & 76.3$\pm$0.8 & 70.5$\pm$0.7 & 63.4$\pm$0.8
      & 81.5$\pm$0.7 & 77.1$\pm$0.7 & 69.5$\pm$0.7 \\
      & C. Ties-Merging
      & 58.4$\pm$6.1 & 65.0$\pm$1.4 & 59.1$\pm$1.2
      & 69.1$\pm$3.0 & 70.0$\pm$1.2 & 62.0$\pm$1.5
      & 72.0$\pm$4.5 & 77.3$\pm$1.0 & 67.0$\pm$1.2 \\
      & OPCM
      & 74.2$\pm$0.7 & 70.8$\pm$0.6 & 64.5$\pm$0.5
      & 81.0$\pm$0.6 & 76.5$\pm$0.6 & 69.4$\pm$0.6
      & 86.2$\pm$0.6 & 82.9$\pm$0.6 & 75.0$\pm$0.6 \\
      & \textbf{\ourmethod (Ours)}
      & \textbf{77.9}$\pm$1.4 & \textbf{73.2}$\pm$1.1 & \textbf{66.0}$\pm$0.9
      & \textbf{84.6}$\pm$0.7 & \textbf{78.0}$\pm$0.8 & \textbf{70.0}$\pm$0.6
      & \textbf{87.0}$\pm$0.8 & \textbf{84.6}$\pm$0.9 & \textbf{80.0}$\pm$0.8 \\
    \midrule

    \multirow{5}{*}{\rotatebox[origin=c]{90}{BWT$_w\uparrow$}}
      & Average (SWA)
      & -10.8$\pm$1.8 & -7.6$\pm$1.3 & -6.9$\pm$1.6
      & -9.0$\pm$1.6  & -6.8$\pm$1.2 & -7.1$\pm$1.4
      & -6.8$\pm$1.2  & -5.4$\pm$0.9 & -6.1$\pm$1.2 \\
      & C. Task Arithmetic
      & -8.7$\pm$1.5  & -1.6$\pm$0.5 & -3.1$\pm$0.7
      & -3.9$\pm$1.2  & -1.5$\pm$0.6 & -3.3$\pm$0.6
      & -5.5$\pm$0.8  & -1.9$\pm$0.5 & -3.0$\pm$0.5 \\
      & C. Ties-Merging
      & -12.0$\pm$5.5 & 1.1$\pm$0.9  & -1.2$\pm$1.0
      & -5.0$\pm$1.8  & 0.8$\pm$0.9  & -1.3$\pm$1.3
      & -9.5$\pm$3.8  & 0.6$\pm$0.7  & -2.5$\pm$1.1 \\
      & OPCM
      & -5.9$\pm$1.0  & -5.6$\pm$0.9 & -7.0$\pm$1.4
      & -4.2$\pm$0.9  & -4.6$\pm$1.2 & -5.8$\pm$2.5
      & -2.8$\pm$0.8  & -4.1$\pm$0.8 & -5.9$\pm$1.5 \\
      & \textbf{\ourmethod (Ours)}
      & {-5.8}$\pm$1.6  & {-4.9}$\pm$1.4 & {-8.1}$\pm$1.7
      & {-4.5}$\pm$1.1  & {-5.9}$\pm$1.2 & {-6.2}$\pm$1.9
      & {-2.5}$\pm$1.0  & {-4.4}$\pm$1.1 & {-7.8}$\pm$1.4 \\
    \bottomrule
  \end{tabular}
\end{table*}

\subsection{Ablation Study \& Efficiency Analysis}
We conduct ablation studies to validate the robustness of our method to key practical factors. In particular, we vary the number of calibration samples and the numerical integration accuracy to assess the sensitivity of performance to calibration budget and solver fidelity. In addition, to validate the efficiency of our method, we conduct an efficiency analysis.

\textbf{Sensitivity to Calibration Set Size}. We evaluate the robustness of ODE-M to the calibration budget by varying the number of calibration samples $n$ over a wide range on an 8-task CMM stream. We repeat each setting five times and report the mean accuracy with standard deviation. As shown in Table~\ref{tab:nsamples}, performance improves steadily as $n$ increases, with diminishing returns beyond a moderate budget. For ViT-B/32, accuracy rises from $62.8$ at $n{=}8$ to $79.4$ at $n{=}512$ and only changes marginally thereafter ($79.6$ at $n{=}1024$ and $79.8$ at $n{=}2048$). A similar saturation trend holds for larger backbones, reaching $85.1$--$85.3$ for ViT-B/16 and $87.5$--$87.8$ for ViT-L/14 once $n\ge 512$. These results suggest that ODE-M remains effective under limited calibration data while benefiting from additional samples when available, and that a few hundred samples already provide strong accuracy.

\begin{table*}[t]
  \caption{\textbf{Sensitivity to calibration set size.} Average accuracy (\%) of ODE-M on CLIP ViT backbones under an 8-task continual model merging stream, as a function of the number of calibration samples $n$. Results are reported as mean $\pm$ std over 5 runs.}
  \label{tab:nsamples}
  \centering
  \small
  \begin{tabular}{lccccccccc}
    \toprule
    {Model} & \multicolumn{9}{c}{{Number of Samples}} \\
    \cmidrule(lr){2-10}
    & 8 & 16 & 32 & 64 & 128 & 256 & 512 & 1024 & 2048 \\
    \midrule
    ViT-B/32
    & $62.8\!\pm\!3.5$
    & $64.9\!\pm\!3.0$
    & $70.1\!\pm\!2.6$
    & $73.0\!\pm\!2.3$
    & $75.0\!\pm\!2.1$
    & $76.8\!\pm\!2.0$
    & $79.4\!\pm\!1.9$
    & $79.6\!\pm\!1.5$
    & $79.8\!\pm\!1.8$ \\
    ViT-B/16
    & $68.5\!\pm\!1.6$
    & $70.6\!\pm\!1.3$
    & $75.8\!\pm\!1.1$
    & $78.7\!\pm\!0.9$
    & $80.7\!\pm\!0.8$
    & $82.5\!\pm\!0.7$
    & $85.1\!\pm\!0.6$
    & $85.3\!\pm\!0.4$
    & $85.1\!\pm\!0.6$ \\
    ViT-L/14
    & $70.9\!\pm\!1.8$
    & $73.0\!\pm\!1.5$
    & $78.2\!\pm\!1.3$
    & $81.1\!\pm\!1.1$
    & $83.1\!\pm\!1.0$
    & $84.9\!\pm\!0.9$
    & $87.5\!\pm\!0.8$
    & $87.7\!\pm\!0.9$
    & $87.8\!\pm\!0.9$ \\
    \bottomrule
  \end{tabular}
\end{table*}

\textbf{Sensitivity to Numerical Integration Accuracy}. We study the sensitivity of ODE-M to numerical integration accuracy by varying the Euler step size on CLIP ViT-B/32 with an 8-task stream. For each step size, we run five trials and report mean accuracy with standard deviation. As shown in Table~\ref{tab:step_size}, performance is stable across a wide range of small to moderate step sizes. In particular, the accuracy remains around $80\%$ for step sizes from $10^{-3}$ to $10^{-2}$ (e.g., $80.2\pm0.5$ at $10^{-3}$ and $80.0\pm0.5$ at $10^{-2}$), and degrades only mildly at $5\times10^{-2}$ ($79.4\pm0.6$). When the discretization becomes overly coarse, accuracy drops more noticeably and variability increases, as observed at $2\times10^{-1}$ ($78.7\pm0.7$). These results indicate that a reasonably fine step size is sufficient to obtain reliable performance while avoiding unnecessary computation, and we use $5e-2$ in our main experiments as a robust default.

\begin{table}[t]
  \caption{\textbf{Sensitivity to numerical integration step size.} Average accuracy (\%) of ODE-M on CLIP ViT-B/32 under an 8-task continual model merging stream, using Euler integration with different step sizes.}
  \label{tab:step_size}
  \centering
  \small
  \setlength{\tabcolsep}{6pt}
  \renewcommand{\arraystretch}{1.12}
      \begin{tabular}{lcc}
        \toprule
        {Step size} & {ViT-B/32}\\
        \midrule
        1e-3   & $80.2\pm0.5$ \\
        2.5e-3 & $80.1\pm0.5$ \\
        5e-3   & $80.0\pm0.5$ \\
        1e-2   & $79.8\pm0.5$ \\
        2.5e-2 & $79.6\pm0.6$ \\
        5e-2   & $79.4\pm0.6$ \\
        1e-1   & $79.5\pm0.6$ \\
        2e-1   & $78.7\pm0.7$ \\
        \bottomrule
      \end{tabular}
\end{table}

\textbf{Efficiency Analysis}. Table~\ref{tab:overhead_arch} validates the efficiency by reporting the average wall-clock time required to merge one incoming task-adapted model into the current merged model over a $20$-task stream for each CLIP ViT backbone. As shown, the overhead increases with model size, taking $123.9$s, $200.8$s, and $246.8$s per merge for ViT-B/32, ViT-B/16, and ViT-L/14, respectively. This cost is mainly incurred by the numerical integration procedure together with the per-step gradient computations on the calibration set, and remains practical in our setting where merging is performed only when a new task model arrives.

\begin{table}[t]
  \centering
  \caption{Runtime overhead of \textsc{ODE-M} across architectures.}
  \label{tab:overhead_arch}
  \small
  \setlength{\tabcolsep}{6pt}
  \begin{tabular}{l r r r}
    \hline
      & \multicolumn{1}{c}{ViT-B/32} & \multicolumn{1}{c}{ViT-B/16} & \multicolumn{1}{c}{ViT-L/14} \\
    \hline
    Overhead (s) & 123.9 & 200.8 & 246.8 \\
    \hline
  \end{tabular}
\end{table}
\section{Related Works}
\subsection{Continual Model Merging}
Model merging has emerged as a scalable paradigm for combining the capabilities of multiple task-specific models into a single model without access to the original training data. Early research primarily focused on the ``one-shot'' (offline) setting, where all expert models are available simultaneously. Foundational techniques in this domain range from simple weight averaging~\citep{wortsman2022model} to more structured arithmetic operations such as Task Arithmetic~\citep{ilharco2022editing} and interference-resolving methods such as Ties-Merging~\citep{yadav2023resolving}.

Continual Model Merging (CMM) extends this paradigm to sequential scenarios where task-adapted models arrive in a stream, requiring the deployed model to be updated iteratively~\citep{tang2025merging}. Initial approaches to CMM largely adapted static merging rules, such as moving averages or cumulative task-vector addition, to the sequential setting. However, these naive extensions often suffer from catastrophic forgetting due to parameter interference. To address this issue, recent works have introduced projection-based mechanisms. For instance, OPCM~\citep{tang2025merging} employs orthogonal projections to filter conflicting gradient directions, while \citet{yang2025continual} proposes dual projections to balance stability and plasticity without data replay. Despite these improvements, existing CMM methods predominantly rely on fixed algebraic combinations that treat model updates as instantaneous jumps in parameter space.

\subsection{Mode Connectivity and Loss Landscapes}
The geometry of the loss landscape plays a pivotal role in understanding the relationship between independently trained neural networks. The theory of mode connectivity posits that isolated local optima found by stochastic gradient descent are not disconnected islands but can often be joined by continuous paths along which the loss remains low~\citep{garipov2018loss, jaiswal2024emergence}. Naive linear interpolation between two models, however, typically traverses a high-loss region referred to as a loss barrier~\citep{entezari2021role}. Recent studies demonstrate that non-linear connecting paths can effectively circumvent these barriers. \citet{ren2025revisiting} and \citet{tran2025linear} explore such connectivity in complex architectures, showing that barrier-free paths exist even for models with distinct initialization or training trajectories.

Existing approaches exploit these insights to improve model fusion, for instance by aligning parameters to account for permutation symmetries~\citep{ainsworth2022git} or by explicitly optimizing a parametric curve between endpoints~\citep{khan2024sok}. However, these methods typically treat path finding as a static, offline optimization problem, which is computationally prohibitive in continual settings. Moreover, prior work on connectivity~\citep{vrabel2024input} primarily focuses on existence results or one-shot merging scenarios.

Neural ODEs have also been used in multi-task learning, e.g. \citet{ye2023mtl} learn task-specific positions in a time-aware ODE flow to mitigate task competition during joint training. Different from this training-based multi-task formulation, our work studies training-free continual model merging and uses an ODE-induced trajectory to regulate the parameter-space transition between the current merged model and each incoming task model. In this way, ODE-M makes connectivity-based, barrier-aware updates practical in continual model merging.
\section{Conclusion and Future Work} 
In this paper, we propose ODE-driven continual model merging (ODE-M), a controllable framework for merging sequentially arriving task-adapted models. Instead of treating each merge as a fixed algebraic update between isolated checkpoints, ODE-M formulates the update as a continuous trajectory in parameter space and uses lightweight first-order feedback to suppress loss-increasing motion along the path. By constructing barrier-aware trajectories and selecting operating points through a utility-aware time schedule, ODE-M provides an explicit mechanism for balancing retained historical knowledge and incoming task expertise. Extensive experiments on standard continual model merging benchmarks demonstrate consistent improvements over strong baselines across different CLIP ViT backbones and stream lengths. Future work will extend this trajectory-based perspective to larger language and multimodal models, develop more efficient integration strategies to reduce merge-time overhead, and study more practical calibration regimes where only proxy, partial, or dynamically updated calibration data is available.

\section*{Impact Statement}
This paper studies continual model merging and proposes an ODE-driven merging procedure that produces a controlled transition between task-adapted models without additional training. The primary positive impact is that such a mechanism can reduce the need for repeated fine-tuning or retraining when new task models arrive, potentially lowering the engineering and computational overhead of maintaining multi-task systems, and making model customization more accessible in resource-constrained settings.

\section*{Acknowledgments}
This work is supported by the Fundamental Research Funds for the Central Universities (N2623009).

\bibliography{references}
\bibliographystyle{icml2026}

\newpage
\appendix
\onecolumn

\section*{Appendix Overview}
This appendix is organized as follows:
\begin{itemize}
    \item \textbf{Theorem proofs} (Appendix~\ref{app:proof}): full proofs of the convergence and bounded-barrier results.
    \item \textbf{Detailed experiments} (Appendix~\ref{app:exp}): task suites, evaluation protocol, and additional experimental results/visualizations.
    \item \textbf{ODE-based merging procedure} (Appendix~\ref{app:procedure}): a step-by-step description and pseudocode of ODE-M.
    \item \textbf{Baseline details under heterogeneous task utility} (Appendix~\ref{app:baseline}): how we adapt each baseline to optimize the same utility-weighted objective.
    \item \textbf{Discussions and limitations} (Appendix~\ref{app:discussion}): practical considerations, assumptions, and remaining limitations.
\end{itemize}

\section{Theorem Proofs}\label{app:proof}
\begin{theorem}[Convergence of Velocity Field]
    Let $\theta(t)$ be the solution of the proposed dynamics $v_t(\theta)$ on $t\in[0,1)$ with $\theta(0)=\theta_0$ and target $\theta_1$. Assume that $g_t=\nabla\mathcal{L}(\theta(t))\neq 0$ for almost every $t\in[0,1)$. Moreover, assume there exists a constant $\rho\in[0,1)$ such that, along the trajectory, for all $t\in[0,1)$, $\frac{\|\mathcal{P}_{g_t}(u_t)\|}{\|u_t\|}\le \rho$. Then the trajectory admits a continuous endpoint extension at $t=1$ and converges to the target model:
    \begin{equation}
        \lim_{t\to 1}\theta(t)=\theta_1.
    \end{equation}
\end{theorem}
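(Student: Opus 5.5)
Define the error $e(t)=\theta_1-\theta(t)$, so that $u_t=\alpha(t)e(t)$ with $\alpha(t)=\frac{1}{1-t}$. The plan is a Lyapunov-type argument on $\|e(t)\|^2$, showing it decays at least like a power of $(1-t)$. Convergence of $\theta(t)$ then follows. The continuous extension at $t=1$ needs a separate integrability step, handled at the end.

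First, compute the derivative. Since $\dot\theta=v_t=u_\perp+\gamma u_\parallel$, we have
\begin{equation*}
\frac{d}{dt}\tfrac12\|e\|^2=-\langle e, u_\perp+\gamma u_\parallel\rangle .
\end{equation*}
The decomposition $u_t=u_\perp+u_\parallel$ is orthogonal, and $e=u_t/\alpha(t)$. Hence $\langle e,u_\perp\rangle=\|u_\perp\|^2/\alpha$ and $\langle e,u_\parallel\rangle=\|u_\parallel\|^2/\alpha$. Therefore
\begin{equation*}
\frac{d}{dt}\tfrac12\|e\|^2=-\frac{1}{\alpha}\left(\|u_\perp\|^2+\gamma\|u_\parallel\|^2\right)\le-\frac{1}{\alpha}\|u_\perp\|^2 ,
\end{equation*}
using $\gamma\ge 0$. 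The assumption $\|u_\parallel\|\le\rho\|u_t\|$ gives $\|u_\perp\|^2\ge(1-\rho^2)\|u_t\|^2=(1-\rho^2)\alpha^2\|e\|^2$. This yields
\begin{equation*}
\frac{d}{dt}\|e\|^2\le-\frac{2(1-\rho^2)}{1-t}\|e\|^2 .
\end{equation*}
At the measure-zero set of times where $g_t=0$, the projection is undefined. There I would set $u_\parallel=0$, or simply note that an inequality holding for a.e.\ $t$ suffices for an absolutely continuous function. Grönwall's inequality then gives
\begin{equation*}
\|e(t)\|\le\|e(0)\|(1-t)^{1-\rho^2}.
\end{equation*}
Since $1-\rho^2>0$, this tends to $0$, so $\theta(t)\to\theta_1$.

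Second, show that the extension is continuous and that $\theta$ is absolutely continuous up to $t=1$. The velocity satisfies $\|\dot\theta\|\le\|u_t\|=\alpha\|e\|$, because $\gamma\le1$ and the two components are orthogonal. Combined with the bound above, $\|\dot\theta\|\le\|e(0)\|(1-t)^{-\rho^2}$, which is integrable on $[0,1)$ because $\rho^2<1$. Thus $\theta(1):=\theta_0+\int_0^1\dot\theta\,dt$ exists, agrees with the limit $\theta_1$, and gives a continuous extension.

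The main obstacle I expect is rigor in the first step rather than the algebra. The field $v_t$ may be discontinuous where $g_t=0$ or where $\gamma$ switches regimes, so global existence of the solution on $[0,1)$ should be taken as given, as the statement does ("the solution"). The differential inequality must then be applied in its almost-everywhere, absolutely continuous form. I would handle this by working with $\log\|e\|^2$ wherever $e\neq0$. If $e$ ever reaches $0$, it stays at $0$, because the derivative of $\|e\|^2$ is nonpositive.
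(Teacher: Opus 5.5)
Your proof is correct and follows the same route as the paper: a Lyapunov argument on $\|\theta(t)-\theta_1\|^2$ with $\alpha(t)=1/(1-t)$, closed by Gr\"onwall to give power-law decay in $(1-t)$. The differences are refinements. You use the fact that $e$ is collinear with $u_t$ to evaluate the cross term exactly rather than bounding it by Cauchy--Schwarz, which yields the sharper rate $(1-t)^{1-\rho^2}$ in place of the paper's $(1-t)^{1-\rho}$; your bound $\|\dot\theta\|\le\|e(0)\|(1-t)^{-\rho^2}$ also shows the extended path has finite length, which the paper does not address.
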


\begin{proof}
    Define the error $e(t)\triangleq\theta(t)-\theta_1$. By construction of the base field $u_t(\theta)=\alpha(t)(\theta_1-\theta)$, we have along the path:
    \begin{equation}
        u_t(\theta(t))=-\alpha(t)e(t),\qquad \alpha(t)=\frac{1}{1-t}
    \end{equation}
    Recall that the proposed velocity takes the form:
    \begin{equation}
        v_t(\theta)=u_{\perp}+\gamma_t u_{\parallel}=u_t(\theta)-(1-\gamma_t)u_{\parallel},
    \end{equation}
    where $\gamma_t\in[0,1]$ and $u_{\parallel}=\mathcal{P}_{g_t}(u_t)$ with $g_t=\nabla\mathcal{L}(\theta(t))$. Therefore, the error dynamics satisfy $\dot e(t)=\dot\theta(t)=v_t(\theta(t))$, and hence
    \begin{equation}
        \dot e(t)=u_t(\theta(t))-(1-\gamma_t)\mathcal{P}_{g_t}(u_t(\theta(t))).
    \end{equation}

    We study the evolution of the squared error $V(t)\triangleq \|e(t)\|^2$. Differentiating and using $\dot V(t)=2\langle e(t),\dot e(t)\rangle$ yields:
    \begin{equation}\label{eq:diff_V}
        \dot V(t)=2\langle e(t),u_t(\theta(t))\rangle-2(1-\gamma_t)\langle e(t),\mathcal{P}_{g_t}(u_t(\theta(t)))\rangle.
    \end{equation}

    The first term is explicitly contractive:
    \begin{equation}\label{eq:first_term}
        2\langle e(t),u_t(\theta(t))\rangle=2\langle e(t),-\alpha(t)e(t)\rangle.
    \end{equation}

    For the second term, since $\gamma_t\in[0,1]$, we have $0\le 1-\gamma_t\le 1$. Applying Cauchy-Schwarz inequality gives:
    \begin{align}\label{eq:second_term}
        -2(1-\gamma_t)\langle e(t),\mathcal{P}_{g_t}(u_t)\rangle &\le 2(1-\gamma_t)\|e(t)\|\,\|\mathcal{P}_{g_t}(u_t)\| \\
        &\le 2\|e(t)\|\,\|\mathcal{P}_{g_t}(u_t)\|\notag
    \end{align}

    By the given assumption, along the path we have $\|\mathcal{P}_{g_t}(u_t)\|\le \rho \|u_t\|$ for $\rho \in [0, 1)$. Moreover, $\|u_t(\theta(t))\|=\alpha(t)\|\theta_1-\theta(t)\|=\alpha(t)\|e(t)\|$. Substituting these into Eq.\eqref{eq:second_term} yields:
    \begin{equation}\label{eq:bound}
        -2(1-\gamma_t)\langle e(t),\mathcal{P}_{g_t}(u_t)\rangle \le 2\rho\,\alpha(t)\|e(t)\|^2=2\rho\,\alpha(t)V(t).
    \end{equation}

    Combining Eq.\eqref{eq:diff_V}, Eq.\eqref{eq:first_term}, and Eq.\eqref{eq:bound}, we obtain the differential inequality:
    \begin{equation}\label{eq:gronwall}
        \dot V(t)\le -2(1-\rho)\alpha(t)V(t)= -\frac{2(1-\rho)}{1-t}V(t).
    \end{equation}

    Let $W(t)\triangleq \log V(t)$ on the set where $V(t)>0$ (the case $V(t)\equiv 0$ is trivial). Dividing both sides of Eq.\eqref{eq:gronwall} by $V(t)$ gives:
    \begin{equation}
        \dot W(t)\le -\frac{2(1-\rho)}{1-t}.
    \end{equation}

    Integrating from $0$ to $t\in[0,1)$ yields:
    \begin{equation}
        \log V(t)-\log V(0)\le -2(1-\rho)\int_0^t \frac{1}{1-s}ds=2(1-\rho)\log(1-t).
    \end{equation}

    Exponentiating both sides gives:
    \begin{equation}
        V(t)\le V(0)\,(1-t)^{2(1-\rho)}.
    \end{equation}

    Taking square roots, we obtain the claimed quantitative bound:
    \begin{equation}
        \|e(t)\|\le \|e(0)\|\,(1-t)^{1-\rho},\quad \forall\,t\in[0,1).
    \end{equation}

    Since $\rho<1$, the right-hand side vanishes as $t\to 1$, implying $\|e(t)\|\to 0$ and therefore $\theta(t)\to\theta_1$. This also shows that the path admits a continuous endpoint extension by defining $\theta(1)\triangleq \theta_1$.
\end{proof}

\begin{theorem}[Bounded Loss Barrier]
    Let $\theta(t)$ be the trajectory generated by $\dot\theta(t)=v_t(\theta(t))$ on $t\in[0,1)$ with $\theta(0)=\theta_0$ and a continuous endpoint extension satisfying $\theta(1)=\theta_1$, and assume that $\gamma(\theta,t)$ and $\mathcal{L}$ are differentiable. Then for all $t\in[0,1)$, we have:
    \begin{equation}
        \mathcal L(\theta(t)) \le \mathcal L(\theta_0) + t\cdot \max\{\Delta\mathcal L,0\}.
    \end{equation}

    Consequently, the loss barrier along the path satisfies:
    \begin{equation}
        B(\theta)\le \max\{\mathcal L(\theta_0)-\mathcal L(\theta_1), 0\}.
    \end{equation}
\end{theorem}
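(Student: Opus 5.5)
The plan is to follow the loss along the trajectory directly. Set $\ell(t)\triangleq\mathcal L(\theta(t))$ and prove the pointwise derivative bound $\dot\ell(t)\le\max\{\Delta\mathcal L,0\}$ on $[0,1)$. The first claim then follows by integrating from $0$ to $t$, and the barrier bound by comparing with the linear interpolation of the endpoint losses. The differentiability assumptions on $\mathcal L$ and $\gamma$ are what make $\ell$ differentiable, so the chain rule applies.

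\textbf{Step 1 (derivative of the loss).} By the chain rule, $\dot\ell(t)=\langle g_t,v_t(\theta(t))\rangle=\langle g_t,u_\perp\rangle+\gamma\langle g_t,u_\parallel\rangle$. By construction $u_\perp=(I-\mathcal P_{g_t})u_t$ is orthogonal to $g_t$, and $\langle g_t,\mathcal P_{g_t}u_t\rangle=\langle g_t,u_t\rangle$. Hence
\begin{equation*}
\dot\ell(t)=\gamma(\theta(t),t)\,\langle g_t,u_t\rangle .
\end{equation*}
At times where $g_t=0$ the projection is undefined, but then $\dot\ell(t)=\langle g_t,\dot\theta\rangle=0$, so the bound holds trivially. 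I will state this explicitly, since it is the only place where the formula for $v_t$ is not well defined.

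\textbf{Step 2 (case split, mirroring the definition of $\gamma$).}
\begin{itemize}
\item If $\langle g_t,u_t\rangle\le 0$, then $\gamma=1$ and $\dot\ell\le 0\le\max\{\Delta\mathcal L,0\}$.
\item If $\langle g_t,u_t\rangle>0$ and $\Delta\mathcal L\le 0$, the clip gives $\gamma=0$, so $\dot\ell=0$.
\item If $\langle g_t,u_t\rangle>0$ and $\Delta\mathcal L>0$, then $\gamma=\min\{\Delta\mathcal L/\langle g_t,u_t\rangle,1\}$, so $\dot\ell=\min\{\Delta\mathcal L,\langle g_t,u_t\rangle\}\le\Delta\mathcal L$.
\end{itemize}
In every case $\dot\ell(t)\le\max\{\Delta\mathcal L,0\}$. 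Integrating over $[0,t]$ yields $\mathcal L(\theta(t))\le\mathcal L(\theta_0)+t\max\{\Delta\mathcal L,0\}$.

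\textbf{Step 3 (barrier).} Subtract the interpolation $t\mathcal L(\theta_1)+(1-t)\mathcal L(\theta_0)=\mathcal L(\theta_0)+t\Delta\mathcal L$. The excess is then at most
\begin{equation*}
t\bigl(\max\{\Delta\mathcal L,0\}-\Delta\mathcal L\bigr)=t\max\{-\Delta\mathcal L,0\}\le\max\{\mathcal L(\theta_0)-\mathcal L(\theta_1),0\}
\end{equation*}
for every $t\in[0,1)$. At $t=1$ the excess is exactly $0$, because $\theta(1)=\theta_1$ by the assumed continuous endpoint extension (which the Convergence theorem supplies). Taking the supremum over $t\in[0,1]$ gives the bound on $B(\theta)$.

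\textbf{Main obstacle.} The real content is the cancellation in Step 1: the orthogonal part contributes nothing to $\dot\ell$, and the clipped $\gamma$ caps the remaining rate at $\Delta\mathcal L$. The delicate points are technical rather than conceptual. First, there is the measure-zero set where $g_t=0$, handled as above. Second, I need $\ell$ to be absolutely continuous so the derivative bound can be integrated. For this I would rely on the stated differentiability, or more minimally on $\mathcal L\in C^1$ together with $\theta$ being $C^1$ on $[0,1)$, and integrate the inequality via the fundamental theorem of calculus on each compact subinterval $[0,t]$.
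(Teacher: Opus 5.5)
Your proposal is correct and follows the paper's proof essentially step for step: the chain rule plus orthogonality of $u_\perp$ to give $\dot\ell=\gamma_t\langle g_t,u_t\rangle$, the same case split on the signs of $\langle g_t,u_t\rangle$ and $\Delta\mathcal L$, integration of the rate bound, and subtraction of the endpoint interpolation to bound $B(\theta)$. Your explicit handling of times with $g_t=0$ and of the endpoint $t=1$ (where the excess is exactly $0$) are small refinements the paper leaves implicit; also, since your derivative bound holds at every $t$, the mean value theorem already gives the integration step, so no absolute-continuity argument is needed.
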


\begin{proof}
    By the chain rule, the loss evolution along the trajectory satisfies:
    \begin{equation}
        \frac{\mathrm d}{\mathrm dt}\mathcal L(\theta(t))=\left\langle \nabla \mathcal L(\theta(t)),\,\dot\theta(t)\right\rangle=\langle g_t, v_t\rangle.
    \end{equation}

    Recall that $v_t=u_\perp+\gamma_t u_\parallel$, where $u_\parallel=\mathcal P_{g_t}(u_t)$ and $u_\perp=(I-\mathcal P_{g_t})u_t$ is orthogonal to $g_t$ by construction. Hence $\langle g_t,u_\perp\rangle=0,\ \langle g_t,u_\parallel\rangle=\langle g_t,u_t\rangle$, which implies:
    \begin{equation}\label{eq:dLdt}
        \frac{\mathrm d}{\mathrm dt}\mathcal L(\theta(t))=\gamma_t\,\langle g_t,u_t\rangle.
    \end{equation}

    We upper bound the right-hand side by considering the two regimes in the definition of $\gamma(\theta, t)$.

    \textbf{Case 1: $\langle g_t,u_t\rangle\le 0$.} By definition, $\gamma_t=1$, so Eq.\eqref{eq:dLdt} yields:
    \begin{equation}
        \frac{\mathrm d}{\mathrm dt}\mathcal L(\theta(t))=\langle g_t,u_t\rangle\le 0 \le \max\{\Delta\mathcal L,0\}.
    \end{equation}

    \textbf{Case 2: $\langle g_t,u_t\rangle> 0$.} In this regime, $\gamma_t=\mathrm{clip}\!\left(\frac{\Delta\mathcal L}{\langle g_t,u_t\rangle},\,0,\,1\right)$. If $\Delta \mathcal{L} \le 0$, then $\frac{\Delta\mathcal{L}}{\langle g_t, u_t \rangle}<0$ and clipping gives $\gamma_t=0$, so $\frac{\mathrm d}{\mathrm dt}\mathcal L(\theta(t))=0\le \max\{\Delta\mathcal L,0\}$. If $\Delta\mathcal L>0$, then $\gamma_t\le \frac{\Delta \mathcal{L}}{\langle g_t, u_t\rangle}$, and thus:
    \begin{equation}
        \frac{\mathrm{d}}{\mathrm{d}t}\mathcal{L}(\theta(t))=\gamma_t\langle g_t, u_t \rangle\le \Delta \mathcal{L}=\max\{\Delta\mathcal L,0\}.
    \end{equation}

    Therefore, in all cases we conclude that for almost every $t\in[0,1)$:
    \begin{equation}\label{eq:derivatives}
        \frac{\mathrm{d}}{\mathrm{d}t}\mathcal{L}(\theta(t))\le \max\{\Delta \mathcal{L}, 0\}
    \end{equation}

    Integrating Eq.\eqref{eq:derivatives} from $0$ to $t$ yields:
    \begin{equation}
        \mathcal{L}(\theta(t))-\mathcal{L}(\theta_0)\le \int_0^t \max\{\Delta\mathcal{L}, 0\}ds=t\cdot\max\{\Delta\mathcal{L}, 0\},
    \end{equation}
    which proves the claimed envelope $\mathcal{L}(\theta(t))\le \mathcal{L}(\theta_0)+t\cdot\max\{\Delta\mathcal{L}, 0\}$.

    It remains to bound the barrier. By Definition~\ref{def:bound}, we have:
    \begin{equation}
        B(\theta)=\sup_{t\in[0,1]}\bigl(\mathcal{L}(\theta(t)) - [(1-t)\mathcal{L}(\theta_0)+t\mathcal{L}(\theta_1)]\bigr).
    \end{equation}

    Using the envelope above and $\Delta\mathcal{L}=\mathcal{L}(\theta_1)-\mathcal{L}(\theta_0)$, we obtain for any $t\in[0, 1)$:
    \begin{align}
        \mathcal{L}(\theta(t))-[(1-t)\mathcal{L}(\theta_0)+t\mathcal{L}(\theta_1)] &\le \mathcal{L}(\theta_0)+t\max\{\Delta\mathcal{L},0\}-\mathcal{L}(\theta_0)-t\Delta\mathcal{L} \\
        &= t(\max\{\Delta\mathcal{L},0\}-\Delta\mathcal{L})=t\cdot (-\min\{\Delta\mathcal{L},0\}) \notag\\
        &\le - \min\{\Delta\mathcal{L}, 0\}=\max\{\mathcal{L}(\theta_0)-\mathcal{L}(\theta_1), 0\}. \notag
    \end{align}
    Taking the supremum over $t\in[0, 1]$ gives:
    \begin{equation}
        B(\theta)\le \max\{\mathcal{L}(\theta_0)-\mathcal{L}(\theta_1), 0\},
    \end{equation}
    which concludes the proof.
\end{proof}

\section{Detailed Experiments}\label{app:exp}
\subsection{Experimental Settings}
\textbf{Task suites}. We evaluate continual model merging on three task suites with increasing lengths.
The \textbf{8-task} suite contains \textsc{SUN397}~\citep{xiao2010sun}, \textsc{Stanford-Cars}~\citep{krause20133d}, \textsc{RESISC45}~\citep{cheng2017remote}, \textsc{EuroSAT}~\citep{helber2019eurosat}, \textsc{SVHN}~\citep{netzer2011reading}, \textsc{GTSRB}~\citep{stallkamp2011german}, \textsc{MNIST}~\citep{lecun2002gradient}, and \textsc{DTD}~\citep{cimpoi2014describing}.
The \textbf{14-task} suite augments the 8-task suite with \textsc{Flowers102}~\citep{nilsback2008automated}, \textsc{PCAM}~\citep{veeling2018rotation}, \textsc{FER2013}~\citep{goodfellow2013challenges}, \textsc{Oxford-IIITPet}~\citep{parkhi2012cats}, \textsc{STL10}~\citep{coates2011analysis}, and \textsc{CIFAR100}~\citep{krizhevsky2009learning}.
The \textbf{20-task} suite further augments the 14-task suite with \textsc{CIFAR10}~\citep{krizhevsky2009learning}, \textsc{Food101}~\citep{bossard2014food}, \textsc{FashionMNIST}~\citep{xiao2017fashion}, \textsc{EMNIST}~\citep{cohen2017emnist}, \textsc{KMNIST}~\citep{clanuwat2018deep}, and \textsc{RenderedSST2}~\citep{radford2021learning}.
Each task corresponds to a supervised classification dataset. At step $k$, the deployed merged model is evaluated on the test split of all tasks seen so far.

\textbf{Backbones and task-adapted models.} Following FusionBench~\citep{tang2024fusionbench}, we adopt CLIP image encoders as the shared architecture for all tasks. We report results on \texttt{openai/clip-vit-base-patch32} (ViT-B/32), \texttt{openai/clip-vit-base-patch16} (ViT-B/16), and \texttt{openai/clip-vit-large-patch14} (ViT-L/14)~\citep{radford2021learning}. Let $\psi_0$ denote the pretrained CLIP vision backbone. For each task $\mathcal{T}_k$, we obtain a task-adapted model $\psi_k$ by fine-tuning the backbone on $\mathcal{T}_k$ with a task-specific classification head. All task models share the same parameter space (same backbone architecture), which enables training-free merging in a continual setting.

\textbf{Continual protocol}. Task-adapted models arrive sequentially. After receiving $\psi_k$, the merging algorithm updates the deployed model $\Psi_k$ without additional training on past-task data (unless explicitly stated). We follow the same task suite definitions above for the 8/14/20-task settings.

\subsection{More Experiment Results}
\textbf{Per-task Accuracy on Longer Streams}. We provide additional per-task accuracy results for longer task streams. The complete per-task accuracies on the $20$-task setting across all CLIP ViT backbones are shown in Fig.~\ref{fig:pertask20}.

\begin{figure}[t]
    \centering
    \includegraphics[width=0.65\linewidth]{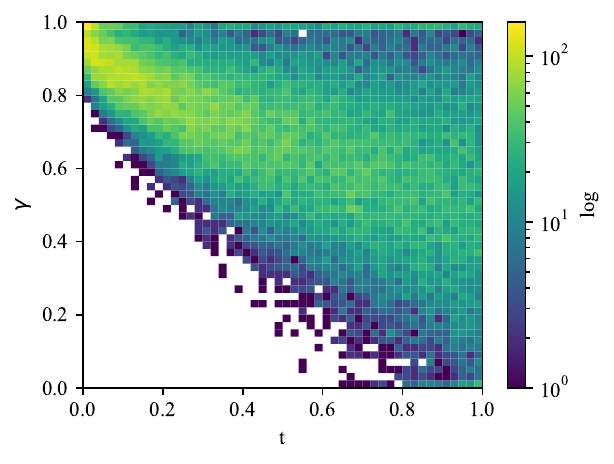}
    \caption{Empirical statistics of the adaptive rectification coefficient $\gamma(\theta,t)$ along the ODE-induced trajectory. Colors show log-frequency over integration steps aggregated across runs.}
    \label{fig:gamma_stats}
\end{figure}

\begin{figure}[t]
    \centering
    \includegraphics[width=0.65\linewidth]{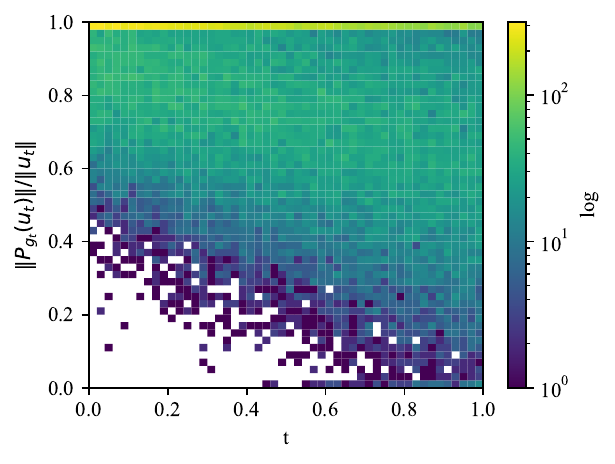}
    \caption{Empirical statistics of the alignment ratio $\|\mathcal{P}_{g_t}(u_t)\|/\|u_t\|$ along the ODE trajectory. Colors show log-frequency over integration steps aggregated across runs.}
    \label{fig:proj_ratio_stats}
    \vspace{-1.3em}
\end{figure}

\begin{figure}[t]
    \centering

    \begin{subfigure}[b]{0.49\linewidth}
        \centering
        \includegraphics[width=\linewidth]{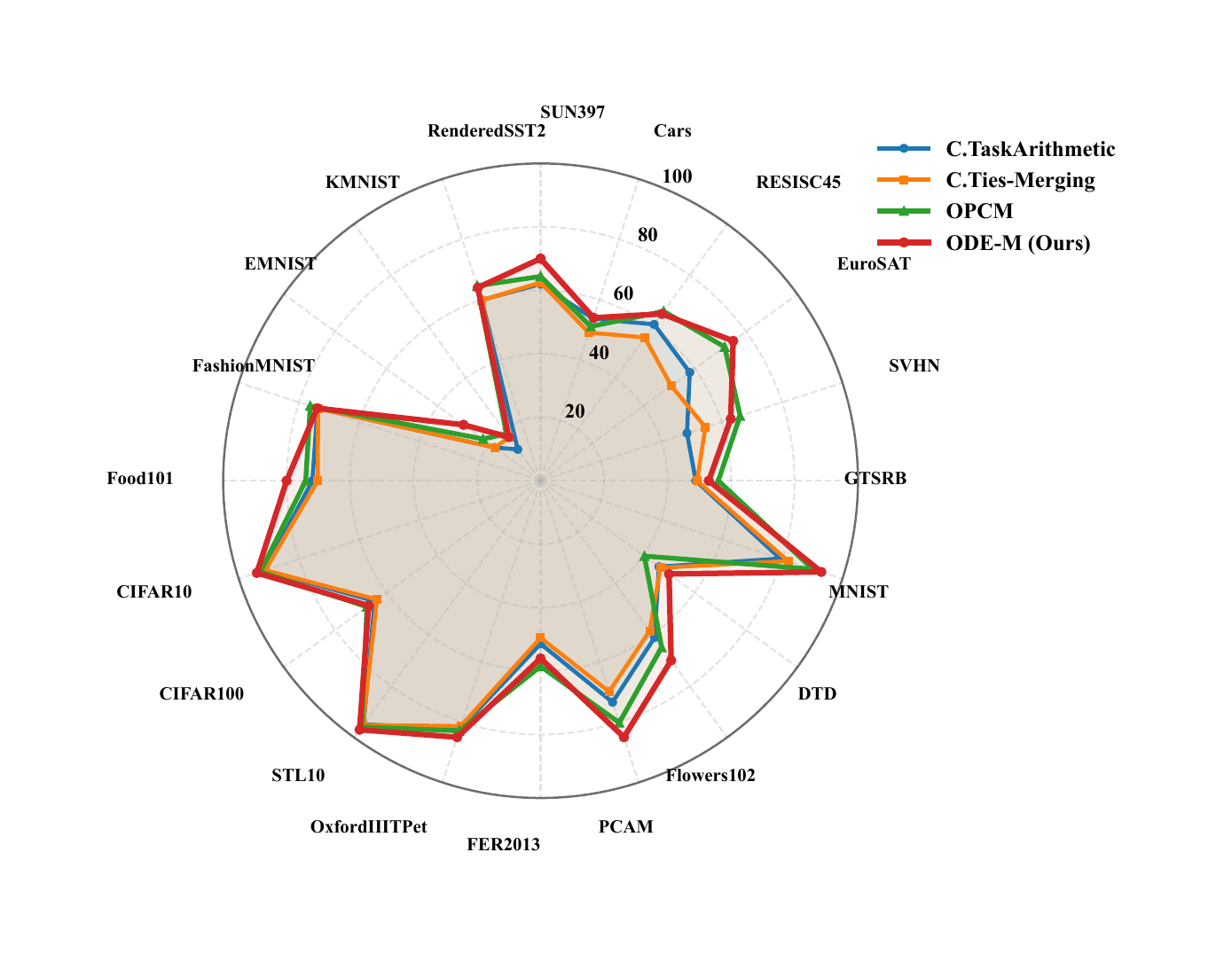}

        \caption{ViT-B/32}
    \end{subfigure}\hfill
    \begin{subfigure}[b]{0.49\linewidth}
        \centering
        \includegraphics[width=\linewidth]{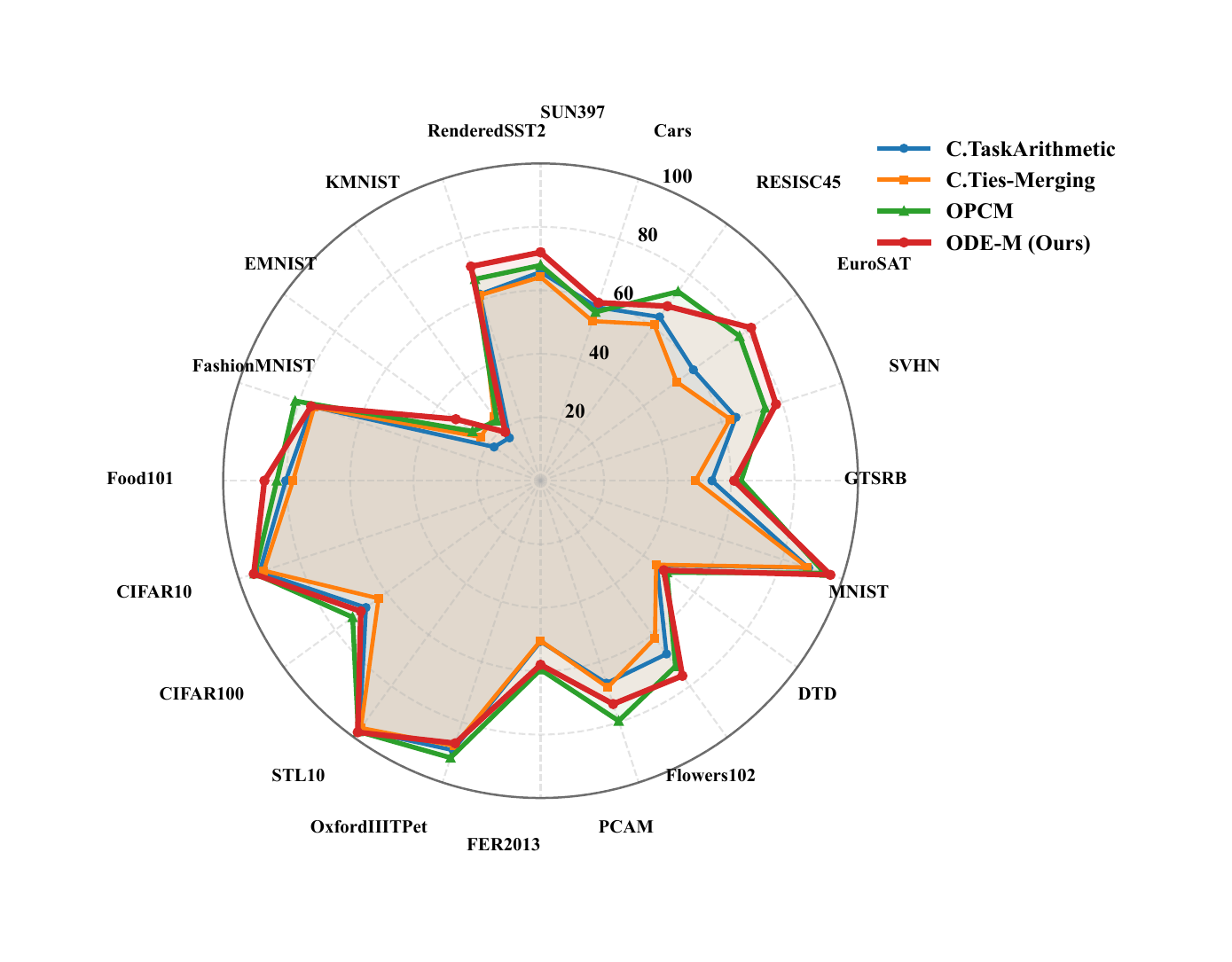}

        \caption{ViT-B/16}
    \end{subfigure}

    \begin{subfigure}[b]{0.62\linewidth}
        \centering
        \includegraphics[width=\linewidth]{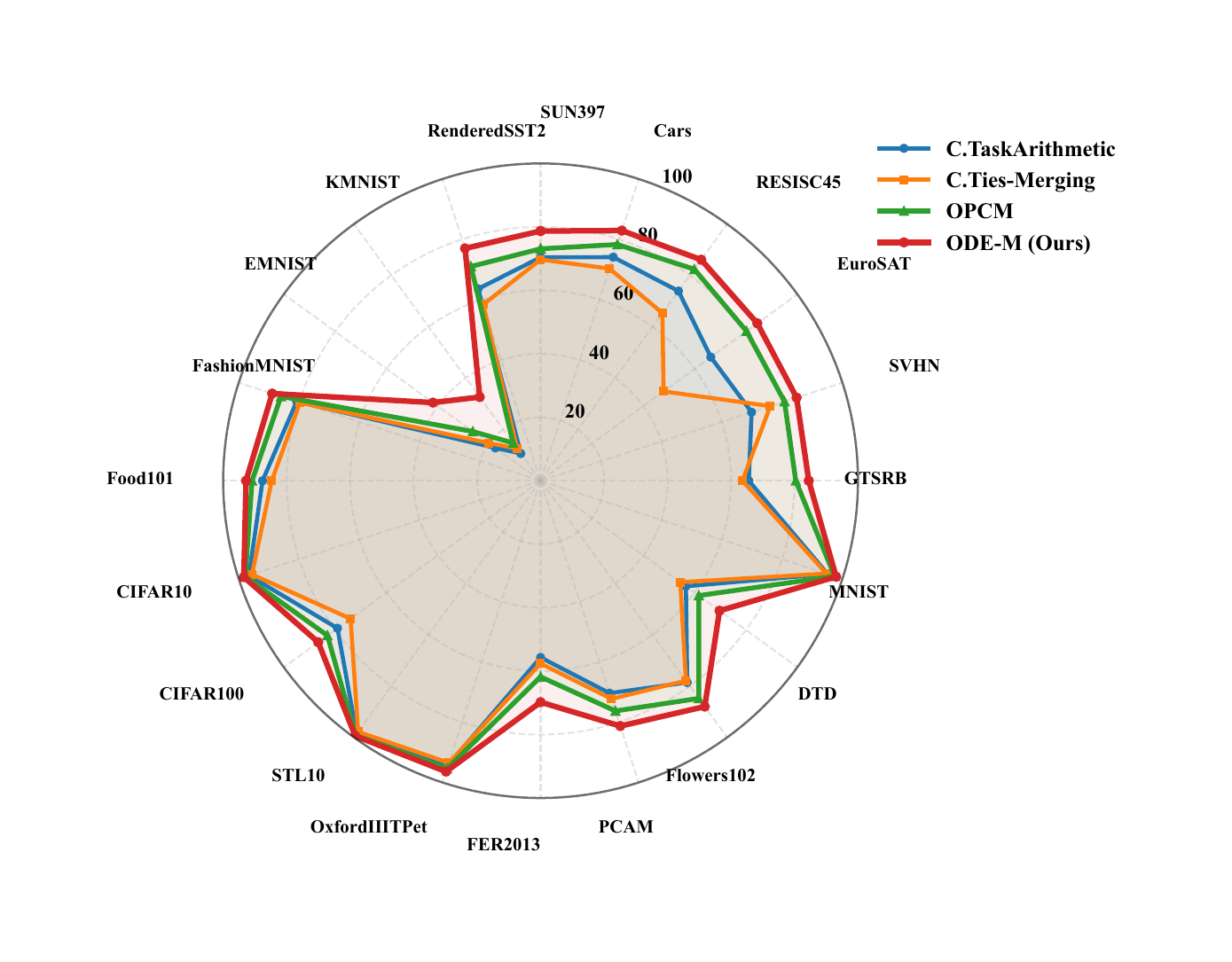}

        \caption{ViT-L/14}
    \end{subfigure}

    \caption{\textbf{Per-task performance on 20-task continual merging.}
    Radar plots show the per-task accuracy of different continual merging methods
    on a 20-task stream for three CLIP ViT backbones: (a) ViT-B/32, (b) ViT-B/16, and (c) ViT-L/14.}
    \label{fig:pertask20}
\end{figure}

\textbf{Per-task Accuracy under Heterogeneous Task Utility}. To further characterize behavior under our utility-weighted setting, we report the per-task accuracies on the full $20$-task stream under heterogeneous task utility. The results are shown in Fig.~\ref{fig:pertask20_heterogeneous}.

\begin{figure}[t]
    \centering
    \begin{subfigure}[b]{0.49\linewidth}
        \centering
        \includegraphics[width=\linewidth]{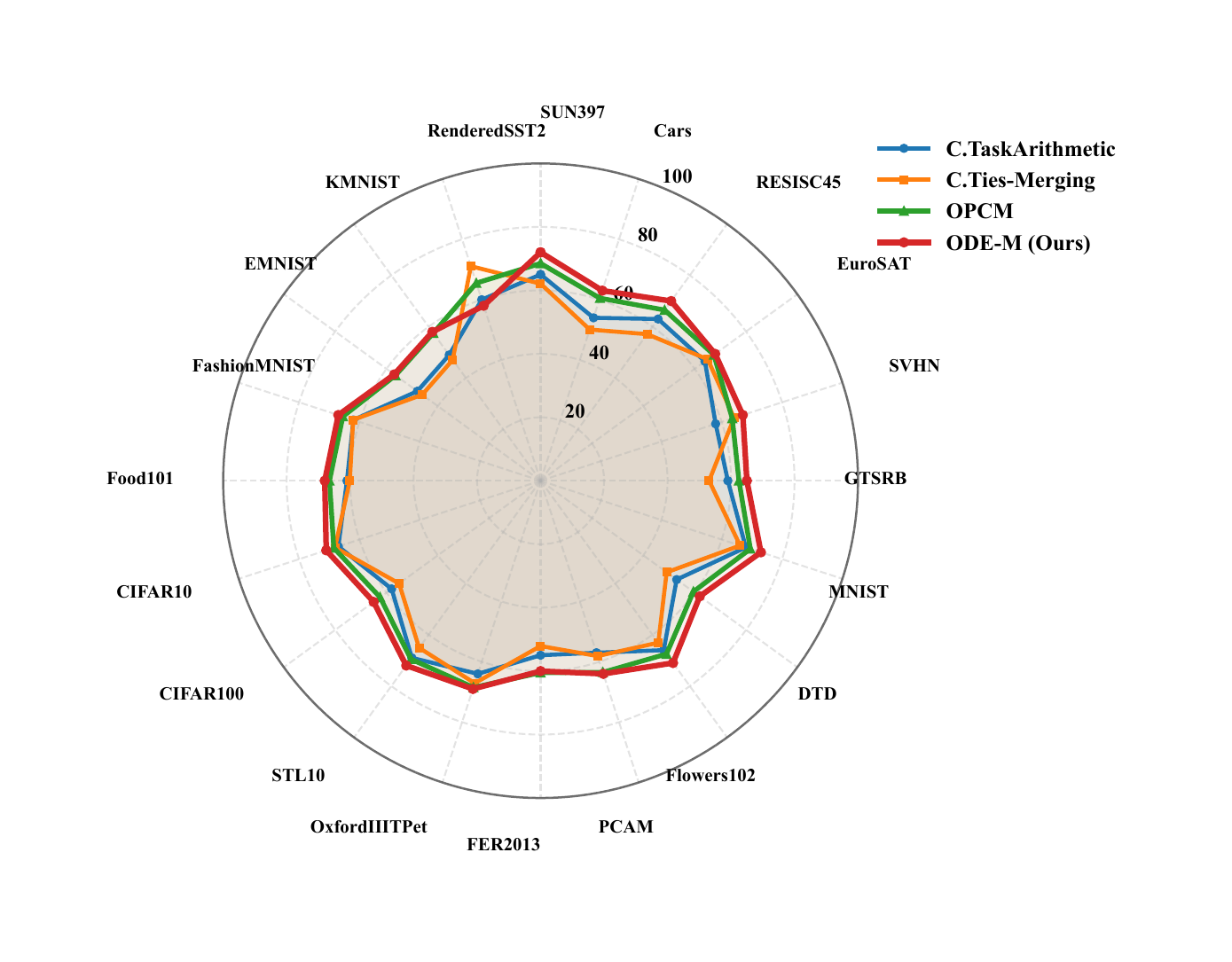}

        \caption{ViT-B/32}
    \end{subfigure}\hfill
    \begin{subfigure}[b]{0.49\linewidth}
        \centering
        \includegraphics[width=\linewidth]{image/vit_b32_20tasks_radar_weighted.pdf}

        \caption{ViT-B/16}
    \end{subfigure}

    \begin{subfigure}[b]{0.62\linewidth}
        \centering
        \includegraphics[width=\linewidth]{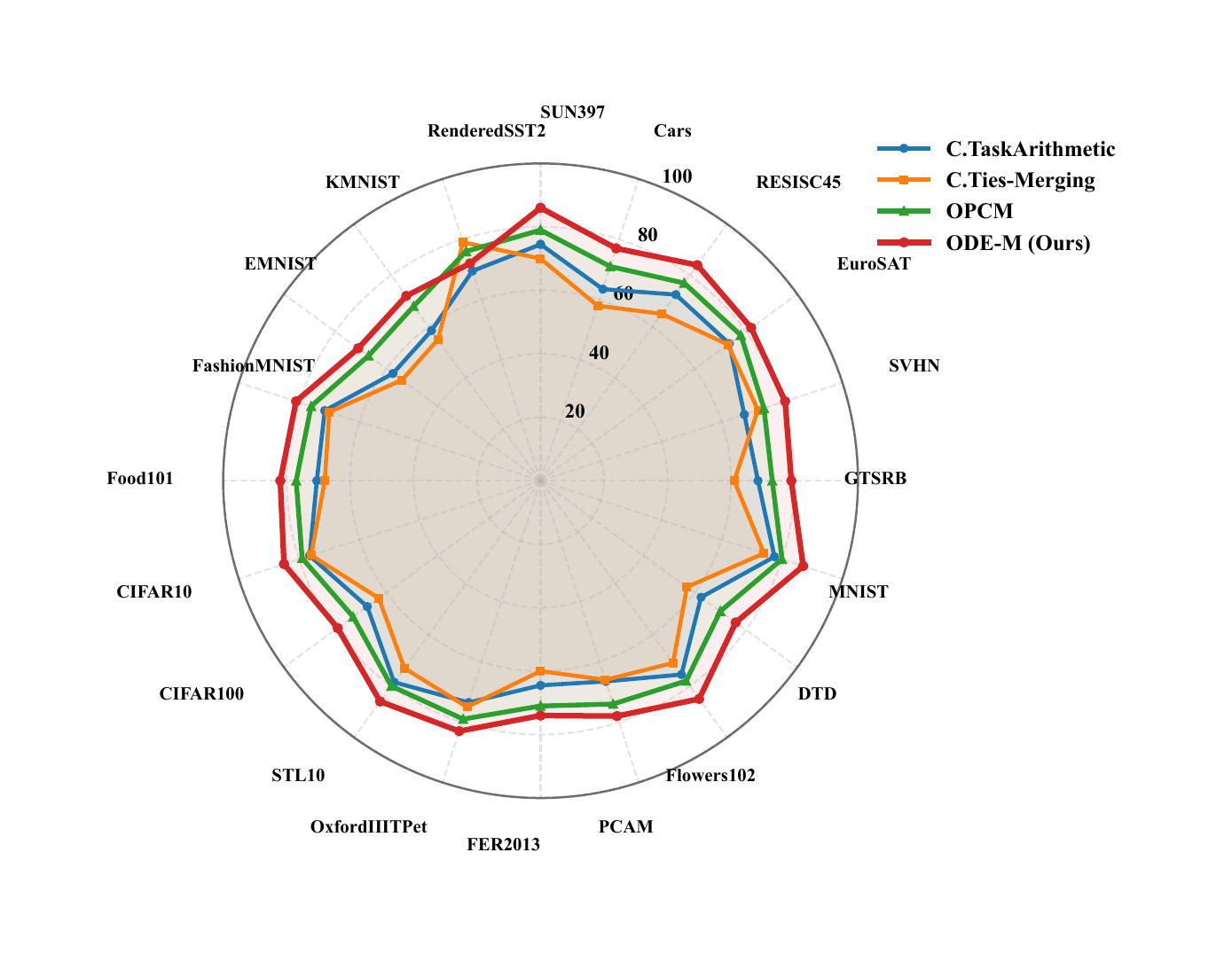}

        \caption{ViT-L/14}
    \end{subfigure}

    \caption{\textbf{Per-task performance on 20-task continual merging (heterogeneous utility).}
    Radar plots show the per-task accuracy of different continual merging methods
    on a 20-task stream for three CLIP ViT backbones: (a) ViT-B/32, (b) ViT-B/16, and (c) ViT-L/14.}
    \label{fig:pertask20_heterogeneous}
\end{figure}

\textbf{Path Property via Two-Task Trajectory Sweeps}. To visualize the controllability induced by our ODE formulation, we study a two-task setting and evaluate models obtained at different terminal times along the trajectory. Given endpoints $\theta_0$ (current) and $\theta_1$ (incoming), we integrate the rectified ODE and record checkpoints $\theta(t)$ at $t\in\{0,0.1,\ldots,1\}$. For each checkpoint, we report the per-task accuracy on the two corresponding test sets and their mean, as shown in Table~\ref{tab:trajectory_sweep}.

\begin{table*}[t]
    \centering
    \small

    \caption{Per-task accuracy along the ODE-induced trajectory $\theta(t)$ connecting two task-adapted models. $t=0$ corresponds to the start model $\theta_0$ and $t=1$ to the target model $\theta_1$. \textbf{Avg} denotes the mean accuracy of the two tasks. For each task pair, the best \textbf{Avg} along the trajectory is highlighted in bold.}
    \label{tab:trajectory_sweep}
    \resizebox{0.99\linewidth}{!}{
    \begin{tabular}{llccccccccccc}
    \toprule
    Pair & Metric & $0.0$ & $0.1$ & $0.2$ & $0.3$ & $0.4$ & $0.5$ & $0.6$ & $0.7$ & $0.8$ & $0.9$ & $1.0$ \\
    \midrule

    \multirow{3}{*}{cars $\rightarrow$ resisc45}
    & cars      & 0.785 & 0.788 & 0.782 & 0.772 & 0.787 & 0.763 & 0.698 & 0.658 & 0.605 & 0.541 & 0.472 \\
    & resisc45  & 0.513 & 0.691 & 0.756 & 0.833 & 0.936 & 0.939 & 0.938 & 0.946 & 0.950 & 0.951 & 0.951 \\
    & \textbf{Avg} & 0.649 & 0.739 & 0.769 & 0.803 & 0.821 & \textbf{0.851} & 0.818 & 0.802 & 0.777 & 0.746 & 0.712 \\
    \midrule

    \multirow{3}{*}{sun397 $\rightarrow$ eurosat}
    & sun397    & 0.749 & 0.752 & 0.750 & 0.743 & 0.749 & 0.731 & 0.683 & 0.653 & 0.609 & 0.554 & 0.490 \\
    & eurosat   & 0.467 & 0.704 & 0.865 & 0.950 & 0.971 & 0.979 & 0.983 & 0.987 & 0.988 & 0.990 & 0.991 \\
    & \textbf{Avg} & 0.608 & 0.728 & 0.808 & 0.847 & 0.850 & \textbf{0.855} & 0.833 & 0.820 & 0.798 & 0.772 & 0.740 \\
    \midrule

    \multirow{3}{*}{dtd $\rightarrow$ resisc45}
    & dtd       & 0.797 & 0.798 & 0.785 & 0.784 & 0.774 & 0.751 & 0.626 & 0.557 & 0.487 & 0.418 & 0.359 \\
    & resisc45  & 0.377 & 0.544 & 0.692 & 0.802 & 0.874 & 0.935 & 0.935 & 0.945 & 0.951 & 0.953 & 0.951 \\
    & \textbf{Avg} & 0.587 & 0.671 & 0.739 & 0.793 & 0.824 & \textbf{0.843} & 0.781 & 0.751 & 0.719 & 0.685 & 0.655 \\
    \midrule

    \multirow{3}{*}{gtsrb $\rightarrow$ resisc45}
    & gtsrb     & 0.989 & 0.988 & 0.986 & 0.983 & 0.976 & 0.962 & 0.878 & 0.728 & 0.541 & 0.357 & 0.243 \\
    & resisc45  & 0.205 & 0.346 & 0.522 & 0.695 & 0.811 & 0.930 & 0.931 & 0.939 & 0.946 & 0.950 & 0.951 \\
    & \textbf{Avg} & 0.597 & 0.667 & 0.754 & 0.839 & 0.893 & \textbf{0.946} & 0.904 & 0.834 & 0.743 & 0.653 & 0.597 \\
    \midrule

    \multirow{3}{*}{gtsrb $\rightarrow$ svn}
    & gtsrb     & 0.989 & 0.988 & 0.986 & 0.982 & 0.971 & 0.962 & 0.868 & 0.745 & 0.571 & 0.404 & 0.273 \\
    & svn       & 0.411 & 0.564 & 0.713 & 0.832 & 0.905 & 0.950 & 0.960 & 0.969 & 0.972 & 0.973 & 0.973 \\
    & \textbf{Avg} & 0.700 & 0.776 & 0.850 & 0.907 & 0.938 & \textbf{0.956} & 0.914 & 0.857 & 0.771 & 0.689 & 0.623 \\
    \bottomrule
    \end{tabular}}
\end{table*}

\textbf{Statistics of the Rectification Coefficient $\gamma$}. To better understand how often the proposed dynamics actively suppress loss-increasing motion, we summarize the empirical behavior of the adaptive coefficient $\gamma(\theta,t)$ along the ODE trajectory. Fig.~\ref{fig:gamma_stats} reports a 2D histogram of $\gamma$ versus integration time $t$, aggregated over all Euler steps across runs, with color indicating log-frequency. The mass concentrates near $\gamma\approx 1$ over a large portion of the trajectory, suggesting that the rectification is typically non-invasive and the update can follow the geometric transport direction without attenuation. Smaller values of $\gamma$ appear in localized regions, indicating that damping is applied selectively when the gradient-aligned component would otherwise increase the loss, which is consistent with our design goal of regulating barrier-forming motion while preserving overall progress toward the target model.

\textbf{Alignment Ratio Between $u_t$ and the Loss Gradient}. We further examine the geometric condition used in our convergence analysis by measuring the relative magnitude of the gradient-aligned component in the base field. Fig.~\ref{fig:proj_ratio_stats} visualizes the empirical distribution of the ratio $\|\mathcal{P}_{g_t}(u_t)\|/\|u_t\|$ as a function of integration time $t$, aggregated over all Euler steps across runs, with color indicating log-frequency. The distribution remains largely concentrated below moderate values throughout the trajectory, suggesting that $u_t$ is typically not dominated by its projection onto the loss gradient. This observation is consistent with our assumption that the motion toward the target generally has a substantial tangent component, and it helps explain why the rectified dynamics can both regulate loss-increasing directions and still maintain stable progression along the trajectory.

\textbf{Empirical Validation of Time Scheduling.} We empirically investigate whether the best operating point along the ODE-induced trajectory reflects the expected accumulation of historical knowledge in continual merging. Intuitively, as more task models have been merged, the current model $\Psi_{k-1}$ contains an increasingly larger amount of accumulated knowledge, and the next update should therefore retain a larger fraction of $\Psi_{k-1}$ while assigning a smaller relative contribution to the incoming model $\psi_k$. To examine this behavior, at each merge step we construct the trajectory between $\Psi_{k-1}$ and $\psi_k$, evaluate candidate merged models at uniformly sampled operating times $t\in[0,1]$ with interval $0.05$, and select the empirically best operating time $t_k^\star$ according to validation performance. Since $t_k^\star$ controls the degree of movement toward the incoming model, $1-t_k^\star$ can be interpreted as a trajectory-level proxy for the retained contribution of the previous merged model. We then compute the Spearman rank correlation between $\{1-t_k^\star\}$ and the equal-utility historical retention ratio $\{(k-1)/k\}$. As shown in Table~\ref{tab:time_schedule_validation}, the retained historical contribution implied by the empirically selected operating times is strongly correlated with the historical retention ratio across architectures and stream lengths. This indicates that the best operating points naturally preserve a larger fraction of the accumulated model as the stream grows, which supports the diminishing incoming-task schedule $t_k=1/k$ in the equal-utility setting and its utility-aware extension $t_k=w_k/\sum_{i=1}^k w_i$.

\begin{table}[t]
    \centering
    \caption{Empirical validation of the time-scheduling behavior. We report the Spearman rank correlation between the retained historical contribution $\{1-t_k^\star\}$ implied by the empirically selected operating times and the equal-utility historical retention ratio $\{(k-1)/k\}$.}
    \label{tab:time_schedule_validation}
    \small
    \setlength{\tabcolsep}{8pt}
    \begin{tabular}{lccc}
    \toprule
    Architecture & 8 tasks & 14 tasks & 20 tasks \\
    \midrule
    ViT-B/32 & 0.941 & 0.907 & 0.884 \\
    ViT-B/16 & 0.953 & 0.921 & 0.896 \\
    ViT-L/14 & 0.936 & 0.912 & 0.889 \\
    \bottomrule
    \end{tabular}
\end{table}

\section{ODE-based Merging Procedure}\label{app:procedure}
Algorithm~\ref{alg:ode_m} summarizes the complete procedure of ODE-M for continual model merging. At merge step $k$, we take the currently deployed model $\Psi_{k-1}$ as the start point $\theta_0$ and the incoming task-adapted model $\psi_k$ as the target $\theta_1$. We then construct an update trajectory by integrating an ODE induced by the rectified velocity field in Section~\ref{sec:velocity_field}. Specifically, at each integration time $t$, we form the base transport field $u_t(\theta)=\alpha(t)(\theta_1-\theta)$ and compute the calibration gradient $g_t=\nabla_\theta \mathcal{L}(\theta;\mathcal{D}_{\mathrm{cal}})$ on a small calibration set $\mathcal{D}_{\mathrm{cal}}$. We decompose $u_t$ into a gradient-aligned component $u_\parallel$ and an orthogonal component $u_\perp$, and modulate the loss-increasing motion by an adaptive factor $\gamma(\theta,t)\in[0,1]$. This yields the rectified field $v_t(\theta)=u_\perp+\gamma(\theta,t)u_\parallel$, whose induced trajectory explicitly regulates loss-increasing updates while preserving geometric progress toward $\theta_1$.

To obtain the next merged model, we do not necessarily integrate to $t=1$. Instead, we select an operating time $t_k$ along the trajectory and set $\Psi_k=\theta(t_k)$, where $t_k$ follows the diminishing schedule in Section~\ref{sec:time_scheduling} (e.g., $t_k=1/k$). In our implementation, we use Euler integration with step size $h$ and terminate once $t$ reaches $t_k$. The computational cost of each merge is thus dominated by $O(\lceil t_k/h\rceil)$ first-order gradient evaluations on $\mathcal{D}_{\mathrm{cal}}$, which keeps merging lightweight compared to full retraining. Unless otherwise stated, we set $|\mathcal{D}_{\mathrm{cal}}|=1024$ and $h=0.05$.

\begin{algorithm}[t]
\caption{Barrier-Aware ODE for Continual Model Merging (ODE-M)}
\label{alg:ode_m}
\begin{algorithmic}[1]
\REQUIRE Stream of task-adapted models $\{\psi_k\}_{k=1}^K$, pretrained weights $\psi_0$; calibration set $\mathcal{D}_{\mathrm{cal}}$; loss $\mathcal{L}(\theta;\mathcal{D}_{\mathrm{cal}})$; step size $h$; scheduler $\alpha(t)$; time rule $t_k$ (e.g., $t_k=1/k$).
\ENSURE Merged model $\Psi_K$.
\STATE $\Psi_1 \leftarrow \psi_1$
\FOR{$k = 2$ $\to$ $K$}
    \STATE $\theta_0 \leftarrow \Psi_{k-1}$, \ $\theta_1 \leftarrow \psi_k$
    \STATE $\theta \leftarrow \theta_0$, \ $t \leftarrow 0$
    \WHILE{$t < t_k$}
        \STATE $g \leftarrow \nabla_{\theta}\mathcal{L}(\theta;\mathcal{D}_{\mathrm{cal}})$
        \STATE $u \leftarrow \alpha(t)\,(\theta_1-\theta)$
        \STATE $u_{\parallel} \leftarrow \frac{\langle u,g\rangle}{\|g\|^2}\,g$, \quad $u_{\perp} \leftarrow u-u_{\parallel}$
        \STATE $\Delta\mathcal{L} \leftarrow \mathcal{L}(\theta_1;\mathcal{D}_{\mathrm{cal}})-\mathcal{L}(\theta_0;\mathcal{D}_{\mathrm{cal}})$
        \IF{$\langle g,u\rangle \le 0$}
            \STATE $\gamma \leftarrow 1$
        \ELSE
            \STATE $\gamma \leftarrow \mathrm{clip}\!\left(\frac{\Delta\mathcal{L}}{\langle g,u\rangle},0,1\right)$
        \ENDIF
        \STATE $v \leftarrow u_{\perp}+\gamma\,u_{\parallel}$
        \STATE $\theta \leftarrow \theta + h\,v$ \COMMENT{Euler update}
        \STATE $t \leftarrow t + h$
    \ENDWHILE
    \STATE $\Psi_k \leftarrow \theta$
\ENDFOR
\STATE \textbf{return} $\Psi_K$
\end{algorithmic}
\end{algorithm}

\section{Baseline Details under Heterogeneous Task Utility}\label{app:baseline}

We adapt the baseline methods to the heterogeneous task utility setting in Section~\ref{sec:task_adapted}.
At the $k$-th merge step, the system has observed task-adapted models $\{\psi_i\}_{i=1}^k$ with non-negative task utilities $\{w_i\}_{i=1}^k$.
We denote the accumulated utility by
\begin{equation}
    W_k \triangleq \sum_{i=1}^k w_i,
\end{equation}
and use the incoming-task utility ratio
\begin{equation}
    \eta_k \triangleq \frac{w_k}{W_k}.
\end{equation}
When all tasks have equal utilities, this reduces to the standard uniform continual schedule $\eta_k=1/k$.

For a fair comparison, all utility-aware baselines are adapted under the same online CMM protocol: at step $k$, the method updates the current merged model $\Psi_{k-1}$ using only the incoming task model $\psi_k$, the shared pre-trained model $\psi_0$, and the utility ratio $\eta_k$.
Whenever a closed-form prefix expression is shown, it is used only to clarify the induced task contribution; the implementation follows the corresponding recursive online update.

\textbf{Continual Averaging.}
For the averaging baseline, the utility-aware update is the natural recursive weighted average:
\begin{equation}
    \Psi_k^{\mathrm{C.SWA}}
    =
    (1-\eta_k)\Psi_{k-1}^{\mathrm{C.SWA}}
    +
    \eta_k \psi_k .
\end{equation}
Equivalently, this recursion yields
\begin{equation}
    \Psi_k^{\mathrm{C.SWA}}
    =
    \sum_{i=1}^k
    \frac{w_i}{W_k}\psi_i ,
\end{equation}
which recovers the usual uniform average when $w_1=\cdots=w_k$.

\textbf{Continual Task Arithmetic.}
Continual Task Arithmetic operates on task vectors relative to the shared pre-trained model.
Let
\begin{equation}
    \delta_k \triangleq \psi_k-\psi_0
\end{equation}
be the incoming task vector, and let $\bar{\delta}_{k-1}^{\mathrm{TA}}$ denote the accumulated task vector from the previous step.
We update the accumulated vector using the same utility ratio:
\begin{equation}
    \bar{\delta}_k^{\mathrm{TA}}
    =
    (1-\eta_k)\bar{\delta}_{k-1}^{\mathrm{TA}}
    +
    \eta_k \delta_k .
\end{equation}
The merged model is then obtained by applying the original task-arithmetic scaling coefficient:
\begin{equation}
    \Psi_k^{\mathrm{C.TA}}
    =
    \psi_0
    +
    \lambda_{\mathrm{TA}}\bar{\delta}_k^{\mathrm{TA}} .
\end{equation}
Here $\lambda_{\mathrm{TA}}$ is kept as the method-specific scaling coefficient used by Task Arithmetic.
This distinction is important: when $\lambda_{\mathrm{TA}}=1$, the update reduces to weighted averaging, but in general Continual Task Arithmetic remains a task-vector method rather than a checkpoint averaging method.

\textbf{Continual Ties-Merging.}
Ties-Merging applies sparsification and sign-consistent merging to task vectors.
To preserve this mechanism in the online setting, we adapt Ties-Merging as a sequential pairwise merge between the accumulated update and the incoming task vector.
Let
\begin{equation}
    \Delta_{k-1}^{\mathrm{TIES}}
    \triangleq
    \Psi_{k-1}^{\mathrm{C.TIES}}-\psi_0
\end{equation}
be the current accumulated update.
At step $k$, we form two utility-weighted candidate vectors,
\begin{equation}
    (1-\eta_k)\Delta_{k-1}^{\mathrm{TIES}}
    \quad \text{and} \quad
    \eta_k\delta_k ,
\end{equation}
and feed them into the standard Ties-Merging operator:
\begin{equation}
    \bar{\delta}_k^{\mathrm{TIES}}
    =
    \mathrm{TIES}
    \left(
    \left\{
    (1-\eta_k)\Delta_{k-1}^{\mathrm{TIES}},
    \eta_k\delta_k
    \right\}
    \right).
\end{equation}
The final merged model is
\begin{equation}
    \Psi_k^{\mathrm{C.TIES}}
    =
    \psi_0
    +
    \lambda_{\mathrm{TIES}}\bar{\delta}_k^{\mathrm{TIES}},
\end{equation}
where the trimming ratio, sign-election rule, and scaling coefficient $\lambda_{\mathrm{TIES}}$ follow the OPCM~\cite{tang2025merging} configuration.
Thus, only the relative contribution of the previous and incoming updates is changed to match the heterogeneous utility schedule, while the core Ties-Merging procedure remains unchanged.

\textbf{OPCM.}
OPCM is a projection-based continual merging method that sequentially incorporates each incoming task model by projecting its task vector against the current accumulated update.
Let
\begin{equation}
    \Delta_{k-1}^{\mathrm{OPCM}}
    \triangleq
    \Psi_{k-1}^{\mathrm{OPCM}}-\psi_0,
    \qquad
    \delta_k \triangleq \psi_k-\psi_0 .
\end{equation}
Following the original OPCM procedure, the incoming task vector is first projected with respect to the current accumulated update:
\begin{equation}
    \hat{\delta}_k
    =
    \mathcal{P}_{\alpha}^{(k-1)}
    \left(
    \delta_k ; \Delta_{k-1}^{\mathrm{OPCM}}
    \right),
\end{equation}
where $\mathcal{P}_{\alpha}^{(k-1)}(\cdot)$ denotes OPCM's orthogonal projection operator.
In practice, this projection is applied to weight matrices in linear layers, while the remaining parameters follow the original OPCM treatment.

To adapt OPCM to heterogeneous task utility, we keep its projection and adaptive scaling mechanisms unchanged, and only replace the default uniform contribution of the incoming projected update with the utility ratio
\begin{equation}
    \eta_k=\frac{w_k}{W_k}.
\end{equation}
Specifically, we form the utility-aware accumulated update as
\begin{equation}
    A_k^{\mathrm{OPCM}}
    =
    (1-\eta_k)A_{k-1}^{\mathrm{OPCM}}
    +
    \eta_k \hat{\delta}_k ,
\end{equation}
where $A_{k-1}^{\mathrm{OPCM}}$ denotes the accumulated projected update maintained by OPCM.
The merged model is then obtained with OPCM's original adaptive scaling rule:
\begin{equation}
    \Psi_k^{\mathrm{OPCM}}
    =
    \psi_0
    +
    \frac{A_k^{\mathrm{OPCM}}}{\lambda_k^{\mathrm{OPCM}}}.
\end{equation}
Here $\lambda_k^{\mathrm{OPCM}}$ is computed following the original OPCM scaling strategy, which controls the magnitude of the merged update relative to the pre-trained model.
Thus, the utility-aware adaptation does not change OPCM's core orthogonal projection mechanism; it only adjusts the relative contribution of the incoming projected update so that the sequential update is aligned with the heterogeneous utility schedule.

Overall, these adaptations ensure that all baselines are evaluated under the same heterogeneous utility specification.
Without this alignment, a method designed for a uniform macro-average objective may be artificially advantaged or penalized when the evaluation objective emphasizes a non-uniform task mixture.

\section{Discussions and Limitations}\label{app:discussion}
\textbf{Discussions}. Our method reframes a continual merging update as selecting an operating point along an ODE-induced connecting trajectory between the current deployed model and the incoming task-adapted model. This perspective makes the update \emph{path-dependent} rather than \emph{endpoint-only}, exposing an explicit control handle through time scheduling along the trajectory. Empirically, the resulting behavior is consistent with the intuition suggested by mode connectivity, namely that well-chosen continuous transitions can avoid sharp loss spikes even when direct interpolation is unfavorable. The rectified velocity field further provides a lightweight mechanism to regulate loss-increasing motion using only first-order feedback on a small calibration set, which keeps the procedure compatible with training-free continual merging pipelines.

\textbf{Limitations}. Our approach requires access to a loss function and first-order gradients on a calibration set during merging. While the calibration budget is small in our implementation, this assumption may not hold in fully black-box deployment scenarios, and the effectiveness can depend on how representative the calibration data is of previously merged tasks. In addition, the method introduces several numerical choices, such as the integration step size and the time schedule along the trajectory. We find the method robust across a wide range of these settings, but extremely coarse discretization or poorly chosen schedules can still degrade performance. Finally, although our experiments cover multiple CLIP ViT backbones and stream lengths, the current evaluation focuses on vision classification-style task streams; validating the same trajectory-based control principles in other modalities and larger-scale foundation models remains an important direction for future work.

\end{document}